\let\parens\undefined
\newcommand{\parens}[1]{{\left(#1\right)}}
\let\braces\undefined
\newcommand{\braces}[1]{{\left\{#1\right\}}}
\let\abs\undefined
\newcommand{\abs}[1]{{\lvert#1\rvert}}
\let\norm\undefined
\newcommand{\norm}[1]{{\lVert#1\rVert}}
\let\mrm\undefined
\newcommand{\mrm}[1]{{\mathrm{#1}}}
\let\mbb\undefined
\newcommand{\mbb}[1]{{\mathbb{#1}}}
\let\mbf\undefined
\newcommand{\mbf}[1]{{\mathbf{#1}}}
\let\mcal\undefined
\newcommand{\mcal}[1]{{\mathcal{#1}}}
\theoremstyle{plain}
\newtheorem{theorem}{Theorem}
\newtheorem{lemma}{Lemma}
\title{\LARGE \bf
Chordal Sparsity for Lipschitz Constant Estimation \\of Deep Neural Networks
}
\author{Anton Xue, Lars Lindemann, Alexander Robey, Hamed Hassani, George J. Pappas, and Rajeev Alur ${}^\dagger$
    \thanks{${}^\dagger$ The authors are with the School of Engineering and Applied Sciences, University of Pennsylvania, Philadelphia, PA, USA.
    {\tt\small \{antonxue,larsl,arobey1,hassani,pappasg,alur\}@ seas.upenn.edu}}
}
\begin{document}

\maketitle
\thispagestyle{empty}
\pagestyle{empty}

\begin{abstract}
Lipschitz constants of neural networks allow for guarantees of robustness in image classification, safety in controller design, and generalizability beyond the training data.
As calculating Lipschitz constants is NP-hard, techniques for estimating Lipschitz constants must navigate the trade-off between scalability and accuracy.
In this work, we significantly push the scalability frontier of a semidefinite programming technique known as LipSDP while achieving zero accuracy loss.
We first show that LipSDP has \emph{chordal sparsity}, which allows us to derive a chordally sparse formulation that we call Chordal-LipSDP.
The key benefit is that the main computational bottleneck of LipSDP, a large semidefinite constraint, is now decomposed into an \emph{equivalent} collection of smaller ones --- allowing Chordal-LipSDP to outperform LipSDP particularly as the network depth grows.
Moreover, our formulation uses a tunable sparsity parameter that enables one to gain tighter estimates without incurring a significant computational cost.
We illustrate the scalability of our approach through extensive numerical experiments.
\end{abstract}

\section{Introduction}
\label{sec:introduction}

Neural networks are arguably the most common choice of function approximators used in machine learning and artificial intelligence. Their success is well documented in the literature and showcased in various applications, e.g., in solving the game Go  \cite{silver2018general} and in handwritten character recognition \cite{lecun1998gradient}. However, neural networks have shown to be non-robust, i.e., their outputs may be sensitive to small changes in the inputs and result in large deviations in the outputs \cite{goodfellow2014explaining, su2019one}.
Hence, it is often unclear what a neural network exactly learns and how it can generalize to previously unseen data. This is a particular concern in safety-critical applications like perception in autonomous driving, where one would like to be robust and even obtain a robustness certificate. A way to measure the robustness of a neural network $f:\mathbb{R}^{n_1}\to \mathbb{R}^m$ is to calculate its Lipschitz constant $L$ that satisfies
\begin{align*}
    \|f(x)-f(y)\|\le L \|x-y\| \quad \text{for all \(x,y\in\mathbb{R}^{n_1}\)}.
\end{align*}

The exact calculation of $L$ is NP-hard and hence poses computational challenges \cite{virmaux2018lipschitz,jordan2020exactly}.
Therefore, past efforts have focused on estimating upper bounds on the Lipschitz constant $L$ in computationally efficient ways.
A key difficulty here is to appropriately model the nonlinear activation functions within a neural network.
For feedforward neural networks, the authors in~\cite{fazlyab2019efficient} abstract activation functions using incremental quadratic constraints~\cite{accikmecse2011observers}.
These are then formed into a convex semidefinite program, referred to as LipSDP, whose solution yields tight upper bounds on \(L\).
As the size of the neural network grows, however, the general formulation of LipSDP becomes computationally intractable.
One may partially alleviate this issue by selectively reducing the number of optimization variables, which will induce sparsity into LipSDP at the cost of a looser bound.
Still, this does not address the core computational bottleneck of LipSDP, which is that the solver must process a large semidefinite constraint whose dimension scales with the number of neurons.

In this paper, we study \emph{computationally efficient formulations of LipSDP}.
In particular, we introduce a variant of LipSDP that exhibits \emph{chordal sparsity}~\cite{vandenberghe2015chordal, zheng2019chordal}, which allows us to decompose a large semidefinite constraint into an equivalent collection of smaller ones.
Moreover, our formulation has a \emph{tunable sparsity parameter}, enabling one to trade-off between efficiency and accuracy.
We call our decomposed semidefinite program \emph{Chordal-LipSDP}, and study its theoretical properties and computational performance in this paper.
The contributions of our work are as follows:
\begin{itemize}
    \item We introduce a variant of LipSDP formulated in terms of a sparsity parameter \(\tau\) and precisely characterize its chordal sparsity pattern.
    This allows us to decompose LipSDP, which is a large semidefinite constraint, into a collection of smaller ones, yielding an equivalent problem that we call Chordal-LipSDP.
    
    \item We present numerical evaluations and observe that Chordal-LipSDP is significantly faster than LipSDP, especially for deeper networks, without accuracy loss relative to LipSDP.
    Furthermore, adjusting \(\tau\)  allows Chordal-LipSDP to obtain rapidly tightening bounds on \(L\) without incurring a high performance penalty.
    
    \item We make an open-source implementation available at \href{https://github.com/AntonXue/chordal-lipsdp}{{\tt \small github.com/AntonXue/chordal-lipsdp}}.

\end{itemize}

\subsection{Related Work}

There has been a great interest in the machine learning and control communities towards efficiently and accurately estimating Lipschitz constants of neural networks. Indeed, it has been shown that there is a close connection between the Lipschitz constant of a neural network and its ability to generalize \cite{bartlett2017spectrally}.
The authors in~\cite{miyato2018spectral} were among the first to normalize weights of a neural network based on the Lipschitz constant.
In control, Lipschitz constants of neural network-based control laws can be used to obtain stability or safety guarantees \cite{jin2020stability,lindemann2021learning}. Training neural networks with a desired Lipschitz constant is, however, difficult. In practice, one has to either solve constrained optimization problems, e.g., \cite{gouk2021regularisation}, or iteratively bootstrap training parameters. As a consequence, one is interested in obtaining Lipschitz certificates of neural networks. In \cite{virmaux2018lipschitz,jordan2020exactly}, it was shown that the exact calculation of the Lipschitz constant is NP-hard. As estimating Lipschitz constants is computationally challenging, we are here motivated to efficiently estimate the Lipschitz constants of neural networks. 

Broadly, there are two ways for estimating Lipschitz constants of general nonlinear functions, either sampling-based as in \cite{wood1996estimation} and \cite{chakrabarty2020safe}, or using optimization techniques~\cite{fazlyab2019efficient, latorre2020lipschitz}.
A naive approach is to calculate the product of the norm of the weights of each individual layer.
The authors in \cite{virmaux2018lipschitz} follow a similar idea and obtain tighter Lipschitz constants using singular value decomposition and maximization over the unit cube.
This, however, still becomes quickly computationally intractable for large neural networks.
Tighter bounds have been obtained in \cite{combettes2020lipschitz}, capturing cross-layer dependencies using compositions of nonexpansive averaged operators.
Again, however, this approach does not scale well with the number of layers.
While these works estimate global Lipschitz constants, it was shown in \cite{avant2021analytical} that estimating local Lipschitz can be done more efficiently.

In this paper, we build on the LipSDP framework presented in \cite{fazlyab2019efficient}, which amounts to solving a semidefinite optimization program.
LipSDP abstracts activation functions into quadratic constraints and allows one to encode rich layer-to-layer relations, allowing for trade-offs in accuracy and efficiency.
While LipSDP considers the $l_2$-norm, general $l_p$-norms on the input-output relation of a neural network can be conservatively obtained using the equivalence of norms.
The authors in \cite{latorre2020lipschitz} present LiPopt, which is a polynomial optimization framework that allows one to calculate tight estimates of Lipschitz constants for $l_2$ and $l_\infty$-norms.
For $l_2$-norms, however, LipSDP is empirically shown to have tighter bounds.
The exact computation of the Lipschitz constant under $l_1$ and $l_\infty$ norms was presented in \cite{jordan2020exactly} by solving a mixed integer linear program.
Lipschitz continuity of a neural network with respect to its training parameters has been analyzed in \cite{herrera2020estimating}.

We show that a particular formulation of LipSDP satisfies \emph{chordal sparsity}~\cite{vandenberghe2015chordal, zheng2019chordal}, from which we apply chordal decomposition to obtain Chordal-LipSDP.
Applications of chordal sparsity have also been explored in other domains \cite{mason2014chordal,ihlenfeld2022faster,chen2020chordal,newton2021exploiting}. The key benefit of exploiting chordal sparsity is that a large semidefinite constraint is decomposed into an equivalent collection of smaller ones, in particular, allowing us to scale to deeper networks.
This equivalence also means that LipSDP and Chordal-LipSDP will compute \emph{identical} estimates of the Lipschitz constant.


\section{Background and Problem Formulation}
\label{sec:background}

In this section, we state the problem formulation and provide background on LipSDP and chordal sparsity.

\subsection{Lipschitz Constant Estimation of Neural Networks}

We consider feedforward neural networks $f:\mathbb{R}^{n_1}\to \mathbb{R}^m$ with $K \ge 2$ layers, i.e., $K - 1$ hidden layers and one linear output layer. From now on, let \(x_1 \in \mbb{R}^{n_1}\) denote the input of the neural network. The output of the neural network is recursively computed for layers \(k = 1, \ldots, K - 1\) as
\begin{align}
\label{eq:ffnet}
    f(x_1) \coloneqq W_K x_K + b_K,
    \quad
     x_{k+1} \coloneqq \phi (W_k x_k + b_k),
\end{align}
  where $W_k$ and $b_k$ are the weight matrices and bias vectors of the $k$th layer, respectively, that are assumed to be of appropriate size. We denote the dimensions of $x_2,\hdots,x_K$ by $n_2,\hdots,n_K\in \mathbb{N}$. The function \(\phi(u)\coloneqq \mrm{vcat}(\varphi(u_1), \varphi(u_2) \hdots )\) is the stack vector of activation functions $\varphi$, e.g., ReLU or tanh activation functions, that are applied element-wise. Throughout the paper, we assume that the same activation function is used across all layers.


\subsection{LipSDP}
We now present LipSDP~\cite{fazlyab2019efficient} in a way that enables us later to conveniently characterize the chordal sparsity pattern of LipSDP.
First, let \(\mbf{x} \coloneqq \mrm{vcat}(x_1, \ldots, x_K) \in \mbb{R}^{N}\) be a stack of the state vectors with \(N \coloneqq \sum_{k = 1}^{K} n_k\).
By defining
{\small
\begin{align*}
\begin{split}
    A   
    &\coloneqq \begin{bmatrix}
        W_1 & \cdots & 0 & 0 \\
        \vdots & \ddots & \vdots & \vdots \\
        0 & \cdots & W_{K-1} & 0
    \end{bmatrix},
    \enskip
    B
    \coloneqq \begin{bmatrix}
        0 & I_{n_2} & \cdots & 0 \\
        \vdots & \vdots & \ddots & \vdots \\
        0 & 0 & \cdots & I_{n_K}
    \end{bmatrix}
\end{split}
\end{align*}
}
\noindent and \(b \coloneqq \mrm{vcat}(b_1, b_2, \ldots, b_{K-1})\)
we can rewrite the dynamics of~\eqref{eq:ffnet} as \(B \mbf{x} = \phi(A \mbf{x} + b)\),
where $\phi : \mbb{R}^{N_f} \to \mbb{R}^{N_f}$ is a \(N_f\)-height stack of \(\varphi\) with $N_f \coloneqq n_2 + \cdots + n_K$. To deal with the nonlinear activation function \(\phi\) efficiently, the key idea in LipSDP is to abstract \(\phi\) using incremental quadratic constraints~\cite{accikmecse2011observers}. 
In particular, LipSDP considers a family of symmetric indefinite matrices \(\mcal{Q}\)  such that any matrix \(Q \in \mcal{Q}\) satisfies
\begin{align}
\label{eq:sector-bounded-Q}
    \begin{bmatrix} u - v \\ \phi(u) - \phi(v) \end{bmatrix}^\top
    Q
    \begin{bmatrix} u - v \\ \phi(u) - \phi(v) \end{bmatrix}
    \geq 0
\end{align}
for all \(u, v \in \mbb{R}^{N_f}\).
In the case where each element \(\varphi\) of \(\phi\) is \([\underline{s}, \overline{s}]\)-\emph{sector-bounded}, i.e., where its subgradients satisfy \(\partial \varphi \subseteq [\underline{s}, \overline{s}]\), then one possible parameterization of \(\mcal{Q}\) is 
\begin{align*}
    \mcal{Q} \coloneqq
    \braces{
        \begin{bmatrix} A \\ B \end{bmatrix}^\top
        \begin{bmatrix}
            -2 \underline{s} \overline{s} T & (\underline{s} + \overline{s}) T \\
            (\underline{s} + \overline{s}) T & - 2 T
        \end{bmatrix}
        \begin{bmatrix} A \\ B \end{bmatrix}
    : \gamma_\alpha \geq 0
    }
\end{align*}
where $T$ is a dense matrix that is parametrized by \(\gamma_{\alpha}\). In this paper, we fix an integer \(\tau \geq 0\) and define $T$ as follows\footnote{We specialize \(T\) to be \(\tau\)-banded whereas LipSDP permits \(T\) to be dense. However this restriction induces a chordally sparse structure in LipSDP.}
\begin{align*}
\begin{split}
    T &\coloneqq \sum_{i = 1}^{N_f} (\gamma_\alpha)_{ii} e_i e_i ^\top
        + \sum_{(i, j) \in \mcal{I}_\tau}
            (\gamma_\alpha)_{ij} (e_i - e_j) (e_i - e_j)^\top, \\
    I_\tau &\coloneqq \{(i,j) = 1 \leq i < j \leq N_f, \enskip j - i \leq \tau\}.
\end{split}
\end{align*}

By tuning the value of \(\tau\), we obtain different formulations of \(\mcal{Q}\) that all provide over-approximations of \(\phi\) as in~\eqref{eq:sector-bounded-Q} while allowing us to trade-off on the spectrum of sparsity and accuracy.
In the sparsest case, i.e., \(\tau = 0\), the matrix \(T\) is a nonnegative diagonal matrix and \(\gamma_\alpha \in \mbb{R}_+ ^{N_f}\), while in the densest case, i.e., \(\tau = N_f - 1\), the matrix \(T\) is fully dense and parameterized by \(\gamma_{\alpha} \in \mbb{R}_+ ^{1 + \cdots + N_f}\).

To formulate our variant of  LipSDP, we define the linearly-parametrized matrix-valued functions
\begin{align*}
    Z_\alpha (\gamma_{\alpha})
    &\coloneqq \begin{bmatrix} A \\ B \end{bmatrix}^\top
        \begin{bmatrix}
            -2 \underline{s} \overline{s} T & (\underline{s} + \overline{s}) T \\
            (\underline{s} + \overline{s}) T & - 2 T
        \end{bmatrix}
    \begin{bmatrix} A \\ B \end{bmatrix}, \\
    Z_\ell (\gamma_{\ell})
        &\coloneqq E_K^\top (W_K ^\top W_K) E_K
          - \gamma_{\ell} E_1 ^\top E_1, \\
    E_k &\coloneqq
        \begin{bmatrix} \cdots & 0 & I_{n_k} & 0 & \cdots \end{bmatrix}
        \in \mbb{R}^{n_k \times N},
\end{align*}
where \(T\) is defined as above, \(\gamma_\ell \in \mbb{R}_+\), and \(E_k\) is the \(k\)th block-index selector such that \(x_k = E_k \mbf{x}\).
Now combine the above terms as
\begin{align}\label{eq:Z_gamma}
    Z(\gamma) \coloneqq
    Z_\alpha (\gamma_\alpha) + Z_\ell (\gamma_\ell)
    \in \mbb{S}^{N},
\end{align}
then LipSDP is the following semidefinite program:
\begin{align}
\label{eq:p1}
    \underset{\gamma \geq 0}{\text{minimize}} \enskip \gamma_{\ell}
    \quad
    \text{subject to} \enskip Z(\gamma) \preceq 0
\end{align}
If \(\gamma_\ell ^\star\) is the optimal value of~\eqref{eq:p1}, then the Lipschitz constant of \(f\) is upper-bounded by \((\gamma_{\ell} ^\star)^{1/2}\), see \cite{fazlyab2019efficient}.
\footnote{
    After the publication of this work, we were made aware of~\cite{pauli2021training}, which shows a counterexample to a key claim originally made in LipSDP~\cite{fazlyab2019efficient} (and since amended).
    In general, only the (\(\tau = 0\)) case will certify the Lipschitz constant.
    However, our techniques for analyzing the relevant sparsity structures remain valid.
}
That is,
\begin{align*}
    \norm{f(x) - f(y)} \leq (\gamma_\ell ^\star)^{1/2} \norm{x - y}
    \quad \text{for all \(x, y \in \mbb{R}^{n_1}\)}.
\end{align*}


\subsection{Chordal Sparsity}

Chordal sparsity connects graph theory and sparse matrix decomposition~\cite{griewank1984existence, vandenberghe2015chordal}.
In the context of this paper, we aim to solve the potentially large-scale semidefinite program~\eqref{eq:p1} using chordal sparsity in \(Z(\gamma)\).
This is done by decomposing the semidefinite constraint \(Z(\gamma) \preceq 0\) into an equivalent collection of smaller \(Z_k \preceq 0\) constraints, which we demonstrate in Section~\ref{sec:chordal-lipsdp}.

\subsubsection{Chordal Graphs and Sparse Matrices}
A graph \(\mcal{G}(\mcal{V}, \mcal{E})\) consists of vertices \(\mcal{V} \coloneqq \{1, \ldots, n\}\) and edges \(\mcal{E} \subseteq \mcal{V} \times \mcal{V}\).
We assume that \(\mcal{E}\) is symmetric, i.e. \((i,j) \in \mcal{E}\) implies \((j,i) \in \mcal{E}\), and so \(\mcal{G}(\mcal{V}, \mcal{E})\) is an undirected graph.
We say that the vertices \(\mcal{C} \subseteq \mcal{V}\) form a \emph{clique} if \(u, v \in \mcal{C}\) implies \((u, v) \in \mcal{E}\), and let \(\mcal{C}(i)\) be the \(i\)th vertex of \(\mcal{C}\) under the natural ordering.
A \emph{maximal clique} is a clique that is not strictly contained within another clique.
A \emph{cycle} of length \(l\) is a sequence of vertices \(v_1, \ldots, v_l\) with \((v_l, v_1) \in \mcal{E}\) and adjacent connections \((v_{i}, v_{i+1}) \in \mcal{E}\).
A \emph{chord} is any edge that connects two nonadjacent vertices in a cycle, and we say that a graph is \emph{chordal} if every cycle of length four has at least one chord~\cite{vandenberghe2015chordal}.

An edge set \(\mcal{E}\) can dually describe the sparsity pattern of a matrix.
Given a graph \(\mcal{G}(\mcal{V}, \mcal{E})\), define the set of symmetric matrices of size \(n\) with sparsity pattern \(\mcal{E}\) as
\begin{align}
\label{eq:SE}
    \mbb{S}^n (\mcal{E}) \coloneqq
    \{X \in \mbb{S}^{n} : X_{ij} = X_{ji} = 0 \enskip \text{if} \enskip (i,j) \not\in \mcal{E}\}.
\end{align}
If in addition \(\mcal{G}(\mcal{V}, \mcal{E})\) is chordal and \(X \in \mbb{S}^{n} (\mcal{E})\), then we say that \(X\) has \emph{chordal sparsity} or is \emph{chordally sparse}.
For \(X\) with sparsity \(\mcal{E}\), we say that \(X_{ij}\) is \emph{dense} if \((i,j) \in \mcal{E}\), and that it is \emph{sparse} otherwise.

\subsubsection{Chordal Decomposition of Sparse Matrices}
For a chordally sparse \(X \in \mbb{S}^n (\mcal{E})\), useful decompositions can be analyzed through the cliques of \(\mcal{G}(\mcal{V}, \mcal{E})\).
Given a clique \(\mcal{C}_k \subseteq \mcal{V}\), define its block-index matrix as follows:
\begin{align*}
    (E_{\mcal{C}_k})_{ij}
        = \text{\(1\) if \(\mcal{C}_k(i) = j\) else \(0\)},
        \quad
    E_{\mcal{C}_k} \in \mbb{R}^{\abs{\mcal{C}_k} \times n}.
\end{align*}
By decomposing a chordally sparse matrix with respect to its maximal cliques, a key result in sparse matrix analysis allows us to deduce the semidefiniteness of a large matrix with respect to a collection of smaller matrices.
\begin{lemma}[Theorem 2.10~\cite{zheng2019chordal}]
\label{lem:chordal-psd}
    Let \(\mcal{G}(\mcal{V}, \mcal{E})\) be a chordal graph and let \(\{\mcal{C}_1, \ldots, \mcal{C}_p\}\) be the set of its maximal cliques.
    Then \(X \in \mbb{S}^n (\mcal{E})\) and \(X \succeq 0\) if and only if there exists \(X_k \in \mbb{S}^{\abs{\mcal{C}_k}}\) such that each \(X_k \succeq 0\) and
    \begin{align}
        \label{eq:chordal-decomp}
        X = \sum_{k = 1}^{p} E_{\mcal{C}_k}^\top X_k E_{\mcal{C}_k}.
    \end{align}
\end{lemma}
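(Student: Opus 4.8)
The plan is to handle the two directions separately; the ``if'' direction is elementary, while the ``only if'' direction rests on the combinatorial structure of chordal graphs.

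For the ``if'' direction, suppose \(X = \sum_{k=1}^{p} E_{\mcal{C}_k}^\top X_k E_{\mcal{C}_k}\) with each \(X_k \succeq 0\). Then for any \(v \in \mbb{R}^n\) we have \(v^\top X v = \sum_{k=1}^{p} (E_{\mcal{C}_k} v)^\top X_k (E_{\mcal{C}_k} v) \geq 0\), so \(X \succeq 0\). Moreover the \((i,j)\) entry of \(E_{\mcal{C}_k}^\top X_k E_{\mcal{C}_k}\) vanishes unless \(i, j \in \mcal{C}_k\), and since \(\mcal{C}_k\) is a clique this forces \((i,j) \in \mcal{E}\) whenever \(i \neq j\). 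Hence \(X_{ij} = 0\) for every \((i,j) \notin \mcal{E}\), i.e. \(X \in \mbb{S}^n(\mcal{E})\).

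For the ``only if'' direction I would induct on the number \(p\) of maximal cliques, using the classical fact that a chordal graph admits a clique tree with the running-intersection property (equivalently, a perfect elimination ordering). The case \(p = 1\) is immediate. For \(p \geq 2\), choose a leaf \(\mcal{C}_p\) of the clique tree with unique tree-neighbor \(\mcal{C}_j\), and set \(\mcal{S} \coloneqq \mcal{C}_p \cap \mcal{C}_j\), \(a \coloneqq \mcal{C}_p \setminus \mcal{S}\), \(r \coloneqq \mcal{V} \setminus \mcal{C}_p\). By running intersection no vertex of \(a\) belongs to any clique other than \(\mcal{C}_p\), and \(a \neq \emptyset\) since the maximal cliques are distinct; consequently \(a\) is adjacent to nothing in \(r\), and writing \(X\) in block form on \((a, \mcal{S}, r)\) yields
\begin{align*}
    X = \begin{bmatrix}
        X_{aa} & X_{as} & 0 \\
        X_{sa} & X_{ss} & X_{sr} \\
        0 & X_{rs} & X_{rr}
    \end{bmatrix} \succeq 0 .
\end{align*}
The key step is a Schur-complement split of the separator block: since the principal submatrix of \(X\) on \(\mcal{C}_p\) is PSD we have \(\mrm{range}(X_{as}) \subseteq \mrm{range}(X_{aa})\), so \(X_p \coloneqq \begin{bmatrix} X_{aa} & X_{as} \\ X_{sa} & X_{sa} X_{aa}^{+} X_{as} \end{bmatrix} \succeq 0\), and \(E_{\mcal{C}_p}^\top X_p E_{\mcal{C}_p}\) is supported on \(\mcal{C}_p\). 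Subtracting, the residual \(X - E_{\mcal{C}_p}^\top X_p E_{\mcal{C}_p}\) has identically zero \(a\)-rows and \(a\)-columns, and its block on \(\mcal{V}\setminus a\) equals \(\begin{bmatrix} X_{ss} - X_{sa} X_{aa}^{+} X_{as} & X_{sr} \\ X_{rs} & X_{rr}\end{bmatrix}\), which is precisely the generalized Schur complement of the \(a\)-block of \(X\) and hence PSD.

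To close the induction I would note that this residual, viewed as a matrix on the induced (still chordal) subgraph on \(\mcal{V}\setminus a\), lies in the corresponding sparse cone, and that the maximal cliques of that subgraph are exactly \(\{\mcal{C}_1,\ldots,\mcal{C}_{p-1}\}\) — deleting \(a\) leaves each \(\mcal{C}_k\) (\(k<p\)) intact and still maximal, while \(\mcal{S}\) stays inside \(\mcal{C}_j\). The inductive hypothesis then produces \(X_k \in \mbb{S}^{\abs{\mcal{C}_k}}\) with \(X_k \succeq 0\) and residual \(= \sum_{k<p} E_{\mcal{C}_k}^\top X_k E_{\mcal{C}_k}\) (the selectors merely pick up extra zero columns for \(a\)); adding \(E_{\mcal{C}_p}^\top X_p E_{\mcal{C}_p}\) gives the claimed decomposition. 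I expect the main obstacle to be bookkeeping rather than insight: rigorously verifying that the maximal-clique set of the reduced graph is exactly \(\{\mcal{C}_1,\ldots,\mcal{C}_{p-1}\}\), and validating the pseudoinverse Schur split without an invertibility hypothesis — alternatively one may perturb \(X_{aa} \mapsto X_{aa} + \varepsilon I\) and let \(\varepsilon \to 0\), invoking closedness of the PSD cone.
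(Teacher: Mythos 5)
The paper does not prove Lemma~\ref{lem:chordal-psd} at all: it is imported verbatim as Theorem~2.10 of the cited reference \cite{zheng2019chordal} (the Agler--Helton--McCullough--Rodman decomposition theorem), so there is no internal proof to compare against. Your argument is correct and is essentially the standard proof found in that literature: the easy direction follows from positivity of each summand together with the observation that \(E_{\mcal{C}_k}^\top X_k E_{\mcal{C}_k}\) is supported on the clique \(\mcal{C}_k\); the hard direction peels off a leaf \(\mcal{C}_p\) of a clique tree and splits the separator block by a generalized Schur complement. The two points you flag as bookkeeping are indeed the only delicate spots, and both close: (i) positive semidefiniteness of the principal submatrix on \(\mcal{C}_p\) gives \(\mrm{range}(X_{as}) \subseteq \mrm{range}(X_{aa})\), which simultaneously validates the pseudoinverse construction of \(X_p\) (so \(X_p \succeq 0\)) and identifies the residual's \(\mcal{V}\setminus a\) block as the generalized Schur complement of \(X_{aa}\) in \(X\), hence PSD without any perturbation argument; (ii) since every edge lies in some maximal clique and the running-intersection property confines the vertices of \(a\) to \(\mcal{C}_p\) alone, deleting \(a\) leaves \(\{\mcal{C}_1, \ldots, \mcal{C}_{p-1}\}\) as exactly the maximal cliques of the still-chordal induced subgraph, and the only entries the subtraction modifies lie in the \(\mcal{S} \times \mcal{S}\) block, which sits inside the clique \(\mcal{C}_j\), so the residual remains in the reduced sparse cone. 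One wording correction: ``no vertex of \(a\) belongs to any clique other than \(\mcal{C}_p\)'' should read ``any \emph{maximal} clique other than \(\mcal{C}_p\)'', and the step that \(a\) has no neighbors in \(r\) deserves the one-line justification that every edge of \(\mcal{G}\) is contained in some maximal clique.
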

We say that \eqref{eq:chordal-decomp} is a \emph{chordal decomposition} of \(X\) by \(\mcal{C}_1, \ldots, \mcal{C}_p\), and such a decomposition allows us to solve a large semidefinite constraint using an equivalent collection of smaller ones.

\section{Chordal Decomposition of LipSDP}
\label{sec:chordal-lipsdp}

In this section, we present Chordal-LipSDP, which is a chordally sparse formulation of LipSDP.
We first identify the sparsity pattern for \(Z(\gamma)\) in Theorem~\ref{thm:Z-sparsity} and then present Chordal-LipSDP in Theorem~\ref{thm:max-cliques-Z} as a chordal decomposition of LipSDP.
An equivalence result is then stated in Theorem~\ref{thm:equiv-probs}. The proofs of our results can be found in the appendix.

Our goal is to construct the edge set $\mcal{E}$ of a chordal graph \(\mcal{G}(\mcal{V}, \mcal{E})\) with vertices \(\mcal{V} \coloneqq \{1, \ldots, N\}\) such that \(Z(\gamma) \in \mbb{S}^{N} (\mcal{E})\).
To gain intuition for \(\mcal{E}\), we plot the dense entries of \(Z(\gamma)\) in Figure~\ref{fig:Ztaus} where the \((i,j)\) square is dark if \((Z(\gamma))_{ij}\) is dense, i.e., \((i,j) \in \mcal{E}\).

\begin{figure}[h]
\centering
\begin{minipage}{0.16\textwidth}
    \includegraphics[width=1.0\textwidth]{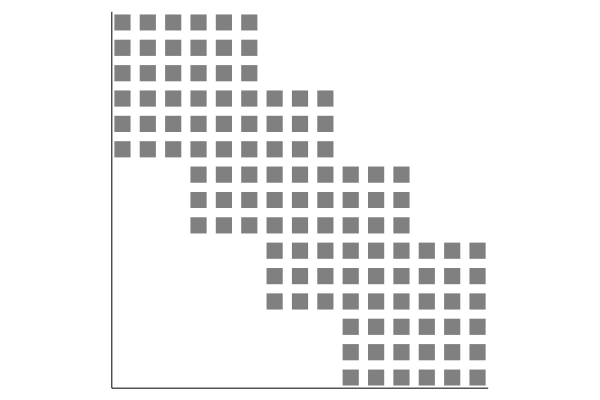}
\end{minipage}%
\begin{minipage}{0.16\textwidth}
    \includegraphics[width=1.0\textwidth]{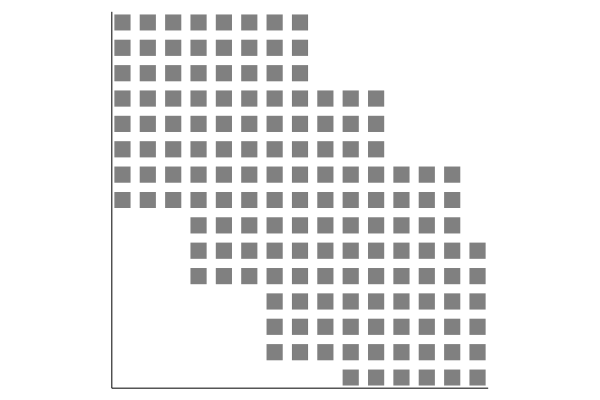}
\end{minipage}%
\begin{minipage}{0.16\textwidth}
    \includegraphics[width=1.0\textwidth]{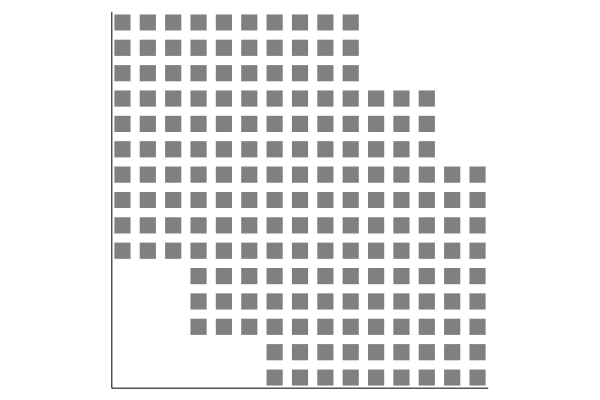}
\end{minipage}

\caption{The sparsity of \(Z(\gamma)\) for \(\tau = 0, 2, 4\) with dimensions \((3,3,3,3,3)\).
For each increment of \(\tau\), each block grows by one unit on the bottom and right, and corresponds to a maximal clique of \(\mcal{G}(\mcal{V}, \mcal{E})\).
As \(\tau\) increases the number of blocks (maximal cliques) will decrease as the lower-right blocks become overshadowed.
At \(\tau = 0\) we have what~\cite{fazlyab2019efficient} refers to as ``LipSDP-neuron''; at \(\tau = N_f - 1\) we have the completely dense ``LipSDP-network''.
}
\label{fig:Ztaus}
\end{figure}

To compactly present our results, we first define a notation for summation as follows:
\begin{align*}
    S(k) \coloneqq \sum_{l = 1}^{k} n_l,
    \quad n_{K+1} \coloneqq m,
    \quad S(0) \coloneqq 0,
    \quad S(K) \coloneqq N.
\end{align*}
Our main results are then stated in the following theorems.

\begin{theorem}
\label{thm:Z-sparsity}
Let  \(Z(\gamma)\) be defined as in  \eqref{eq:Z_gamma}. It holds that    \(Z(\gamma) \in \mbb{S}^{N} (\mcal{E})\), where \(\mcal{E} \coloneqq \bigcup_{k = 1}^{K-1} \mcal{E}_k\) such that
    \begin{align*}
        \mcal{E}_k \coloneqq
        \big\{
            (i,j) :
            &\enskip S(k-1) + 1 \leq i, j \leq S(k+1) + \tau
        \big\}.
    \end{align*}
\end{theorem}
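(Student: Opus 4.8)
The statement to prove is the inclusion $Z(\gamma) \in \mbb{S}^{N}(\mcal{E})$, i.e., that $(Z(\gamma))_{ij} = 0$ for every $(i,j) \notin \mcal{E}$. Since $Z(\gamma) = Z_\alpha(\gamma_\alpha) + Z_\ell(\gamma_\ell)$, the plan is to bound the supports of the two summands separately and show each is contained in $\mcal{E} = \bigcup_{k=1}^{K-1} \mcal{E}_k$. I would introduce two block partitions: the \emph{state blocks} $X_k \coloneqq \{S(k-1)+1, \ldots, S(k)\}$, which partition $\{1, \ldots, N\}$ into pieces of size $n_k$, and the \emph{activation blocks} $F_k$ of size $n_{k+1}$, $k = 1, \ldots, K-1$, which partition $\{1, \ldots, N_f\}$; write $\tilde S(k) \coloneqq n_2 + \cdots + n_{k+1} = S(k+1) - S(1)$ for the partial sums of the latter. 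The point of these partitions is that $A$ and $B$ act blockwise: the columns of $A$ indexed by $X_k$ are supported in the rows $F_k$ (and there $A^\top$ restricts to $W_k^\top$), while the column of $B$ indexed by $j \in X_{k+1}$ is the single standard basis vector supported in row $j - S(1) \in F_k$. Finally, the $\tau$-banded definition of $T$ gives $\mrm{supp}(T) \subseteq \{(p,q) : \abs{p - q} \le \tau\}$.

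The term $Z_\ell(\gamma_\ell)$ is immediate: $E_K^\top (W_K^\top W_K) E_K$ is supported on $X_K \times X_K$, which lies in $\mcal{E}_{K-1}$ since any $i, j \in X_K$ obey $S(K-2)+1 \le i, j$ and $i, j \le S(K) = N \le S(K) + \tau$; and $\gamma_\ell E_1^\top E_1$ is supported on $X_1 \times X_1 \subseteq \mcal{E}_1$. The real work is $Z_\alpha(\gamma_\alpha)$, which expands into the four matrices $M_1^\top T M_2$ with $M_1, M_2 \in \{A, B\}$ (the scalar prefactors $-2\underline{s}\overline{s}$, $\underline{s}+\overline{s}$, $-2$ do not affect supports). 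So fix such a term and a nonzero entry $(i,j)$ of it, with $i \in X_a$ and $j \in X_b$; then there are indices $p, q$ with $(M_1)_{pi} \ne 0$, $(M_2)_{qj} \ne 0$ and $T_{pq} \ne 0$, hence $\abs{p - q} \le \tau$.

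Now I would run a short case analysis on which of $M_1, M_2$ is $A$ and which is $B$, using the blockwise descriptions above to locate $p$ and $q$ inside the activation blocks (when a factor is $A$ the corresponding index ranges over a full block $F_k$; when it is $B$ the index is pinned to $j - S(1)$ or $i - S(1)$). In each case the constraint $\abs{p-q} \le \tau$, after absorbing the shift $S(1)$ contributed by any $B$-factor and using $\tilde S(k) = S(k+1) - S(1)$, implies the single inequality $\max\{i, j\} \le S(c+1) + \tau$ with $c \coloneqq \min\{a, b\}$; meanwhile $\min\{i, j\} \ge S(c-1)+1$ holds for free, since the smaller of $i, j$ lies in $X_c$. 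Together these are exactly the condition $(i,j) \in \mcal{E}_c$, and $c \in \{1, \ldots, K-1\}$ because the columns of $A$ only reach the state blocks $X_1, \ldots, X_{K-1}$ --- the sole borderline case is $a = b = K$ in the term $B^\top T B$, which lands in $\mcal{E}_{K-1}$ just as the $Z_\ell$ terms did. Unioning over the four terms of $Z_\alpha$ and the two of $Z_\ell$ yields $\mrm{supp}(Z(\gamma)) \subseteq \mcal{E}$.

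I expect the only genuine difficulty to be bookkeeping: the two partitions $\{X_k\}$ and $\{F_k\}$ are offset both in starting index and in block size ($n_k$ versus $n_{k+1}$), every appearance of a $B$-factor shifts the relevant $N_f$-index by $S(1) = n_1$, and the banded constraint produces $\pm 1$ discrepancies that must be tracked so that the bound lands exactly at $S(c+1) + \tau$ and not a unit beyond. The degenerate layer $K$, for which there is no $\mcal{E}_K$, also needs its own (easy) handling via $\mcal{E}_{K-1}$ together with $S(K) + \tau = N + \tau$.
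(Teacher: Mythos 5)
Your proposal is correct and follows essentially the same route as the paper's proof: expand \(Z(\gamma)\) term by term, exploit the \(\tau\)-bandedness of \(T\) together with the block structure of \(A\) and \(B\) to bound the support of each of the six summands, and absorb the last-layer and first-layer terms into \(\mcal{E}_{K-1}\) and \(\mcal{E}_1\). The only difference is organizational --- you collapse the paper's chain of auxiliary edge sets (\(\mcal{E}_A\), \(\mcal{E}_B\), and the staircase over-approximation \(\mcal{E}_C\) with its containment lemmas) into a single unified case analysis on which factors are \(A\) or \(B\), landing each nonzero entry directly in \(\mcal{E}_{\min\{a,b\}}\).
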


Note that the set $\mcal{E}$ defined in Theorem \ref{thm:Z-sparsity} is already illustrated in Fig. \ref{fig:Ztaus}.
Also, we implicitly assume that all \((i,j)\) in the definition of $\mcal{E}$ are within \(1 \leq i, j \leq N\).
From this construction of \(\mcal{E}\), it is then straightforward to prove chordality of \(\mcal{G}(\mcal{V}, \mcal{E})\) and identify its maximal cliques.

\begin{theorem}
\label{thm:max-cliques-Z}
    Let \(\mcal{V} \coloneqq \{1, \ldots, N\}\) and define \(\mcal{E}\) as in Theorem~\ref{thm:Z-sparsity}.
    Then \(\mcal{G}(\mcal{V}, \mcal{E})\) is chordal, and the set of its maximal cliques is \(\{\mcal{C}_1, \ldots, \mcal{C}_p\}\), where
    \begin{align*}
        p \coloneqq \min \braces{k : S(k+1) + \tau \geq N}
    \end{align*}
    and each clique \(\mcal{C}_k\) for \(k < p\) has size and elements
    \begin{align*}
        \abs{\mcal{C}_k} \coloneqq n_k + n_{k+1} + \tau,
        \quad
        \mcal{C}_k (i) \coloneqq S(k-1) + i
    \end{align*}
    for \(1 \leq i \leq \abs{\mcal{C}_k}\).
    The final clique \(\mcal{C}_p\) has elements
    \begin{align*}
        \mcal{C}_p (i) \coloneqq S(p - 1) + i,
        \quad 1 \leq i \leq N - S(p-1).
    \end{align*}
\end{theorem}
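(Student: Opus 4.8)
The plan is to repackage the edge set, prove chordality by exhibiting a perfect elimination ordering, and then read the maximal cliques directly off that ordering.

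First I would observe that each $\mcal{E}_k$ is (up to self-loops) the edge set of a \emph{complete} graph on the block of consecutive vertices $V_k \coloneqq \braces{S(k-1)+1,\, \ldots,\, \min(S(k+1)+\tau,\,N)}$, where the clip by $N$ enforces the ``implicitly assume $1 \le i,j \le N$'' convention of Theorem~\ref{thm:Z-sparsity}. Thus each $V_k$ is itself a clique of $\mcal{G}(\mcal{V},\mcal{E})$, and since $\mcal{E}=\bigcup_{k=1}^{K-1}\mcal{E}_k$ we have $(i,j)\in\mcal{E}$ exactly when some single $V_k$ contains both $i$ and $j$. Because $S$ is strictly increasing ($n_l\ge 1$), the left endpoints $S(k-1)+1$ strictly increase in $k$ while the right endpoints $\min(S(k+1)+\tau,N)$ increase and eventually saturate at $N$, which happens once $k$ reaches $p \coloneqq \min\braces{k : S(k+1)+\tau\ge N}$ (and one checks $p\le K-1$ using $S(K)=N$). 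This is exactly the ``staircase'' of Figure~\ref{fig:Ztaus}, and it is the structural fact behind both claims.

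For chordality I would use the standard characterization of chordal graphs by perfect elimination orderings (see, e.g., \cite{vandenberghe2015chordal}) and show that the natural order $1<2<\cdots<N$ is one. Fix $i$ and let $\kappa(i)$ be the largest block index $k$ with $i\in V_k$ (such $k$ exist, as every vertex lies in some block). Monotonicity of $S$ then yields that the later neighbors of $i$ form the \emph{contiguous} interval $\braces{i+1,\,\ldots,\,\min(S(\kappa(i)+1)+\tau,\,N)}$: a vertex $j>i$ is adjacent to $i$ iff some block containing $i$ also reaches $j$, and among the blocks containing $i$ the one of largest index reaches farthest to the right. Together with $i$ this interval lies inside $V_{\kappa(i)}$ — its least element is $i$, and $i\ge S(\kappa(i)-1)+1$ since $i\in V_{\kappa(i)}$ — hence it is a clique. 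So the later-neighbor set of every vertex is a clique, the natural order is a perfect elimination ordering, and $\mcal{G}(\mcal{V},\mcal{E})$ is chordal. (Equivalently, $\mcal{G}$ is the interval graph whose vertex-intervals are the ranges of block indices containing each vertex, which is automatically chordal; but the above is self-contained.)

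For the maximal cliques, let $\mcal{C}$ be any maximal clique and $i\coloneqq\min\mcal{C}$. Every other vertex of $\mcal{C}$ is a later neighbor of $i$, so by the previous step $\mcal{C}\subseteq V_{\kappa(i)}$, and since $V_{\kappa(i)}$ is a clique, maximality forces $\mcal{C}=V_{\kappa(i)}$. Now I match $V_k$ against the claimed list: for $k<p$ there is no clip, so $V_k=\braces{S(k-1)+1,\ldots,S(k-1)+n_k+n_{k+1}+\tau}=\mcal{C}_k$; for $k=p$ the right endpoint has saturated, so $V_p=\braces{S(p-1)+1,\ldots,N}=\mcal{C}_p$; and for $k>p$ we get $V_k=\braces{S(k-1)+1,\ldots,N}$, a \emph{proper} subset of $\mcal{C}_p$ (its left endpoint has strictly advanced), which therefore cannot be a maximal clique — so in fact $\kappa(i)\le p$ and $\mcal{C}\in\braces{\mcal{C}_1,\ldots,\mcal{C}_p}$. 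Conversely each $\mcal{C}_k$ with $k\le p$ \emph{is} maximal: it is a clique, and among $V_1,\ldots,V_p$ both the left endpoint $S(k-1)+1$ and the right endpoint strictly increase in $k$, so none of them is contained in another; since every clique extends to a maximal one and maximal cliques are among the $\mcal{C}_k$, none of the $\mcal{C}_k$ is contained in any clique at all. The routine part throughout is the monotonicity bookkeeping; the one genuinely delicate point — which I would isolate as a short remark — is the treatment of the last block: verifying $p\le K-1$ so that $S(p+1)$ is meaningful, that $V_p$ equals the clipped interval $\mcal{C}_p$, and that the blocks $\mcal{E}_k$ with $k>p$ (those ``overshadowed'' in Figure~\ref{fig:Ztaus}) are engulfed by $\mcal{C}_p$ and hence contribute no new maximal clique.
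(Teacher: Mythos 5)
Your proof is correct, and it takes a more self-contained route than the paper's. The paper establishes chordality by recognizing $\mbb{S}^N(\mcal{E})$ as an overlapping block-diagonal sparsity pattern and citing the known fact that such patterns are chordal (\cite{vandenberghe2015chordal}, Section 8.2); you instead exhibit the natural order $1<\cdots<N$ as a perfect elimination ordering, which requires no external result. For the maximal cliques, both arguments hinge on the same monotonicity bookkeeping --- left endpoints $S(k-1)+1$ strictly increase, right endpoints $\min(S(k+1)+\tau,N)$ increase and saturate at $N$ once $k=p$, so no $V_k$ with $k\le p$ contains another, while every $V_k$ with $k>p$ is swallowed by $V_p$. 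Where your write-up genuinely adds something is the step showing that \emph{every} maximal clique of $\mcal{G}(\mcal{V},\mcal{E})$ equals some $V_{\kappa(i)}$ (via the later-neighbor interval of its least vertex): in general a union of cliques can create maximal cliques that are not among the constituents, and the paper's proof leaves the exclusion of such extraneous cliques implicit in its appeal to the block-diagonal structure, whereas you close that gap explicitly. Your parenthetical observation that $\mcal{G}$ is an interval graph is another clean way to see chordality. The one point to state rather than assert is that every vertex lies in some $V_k$ (so $\kappa(i)$ is well defined); this follows since $i$ with $S(k-1)<i\le S(k)$ lies in $V_k$ when $k\le K-1$ and in $V_{K-1}$ when $k=K$, using $S(K)=N$.
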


Using Lemma \ref{lem:chordal-psd}, the maximal cliques \(\{\mcal{C}_1, \ldots, \mcal{C}_p\}\) from Theorem \ref{thm:max-cliques-Z} now give a chordal decomposition of \(Z(\gamma)\), and lets us  formulate the following semidefinite program that we call Chordal-LipSDP:
\begin{align}
\label{eq:p2}
\begin{split}
    \underset{\gamma \geq 0, Z_1, \ldots, Z_p}{\text{minimize}} &\quad \gamma_{\ell} \\
    \text{subject to}
        &\quad Z (\gamma) = \sum_{k = 1}^{p} E_{\mcal{C}_k} ^\top Z_k E_{\mcal{C}_k}, \\
        &\quad Z_k \preceq 0 \enskip \text{for} \enskip k = 1, \ldots, p,
\end{split}
\end{align}
We remark that solving Chordal-LipSDP is typically much faster than solving LipSDP, especially for deep neural networks, as we impose a set of smaller semidefinite matrix constraints instead of one large semidefinite matrix constraint. In other words, the computational benefit of~\eqref{eq:p2} over~\eqref{eq:p1} is that each \(Z_k \preceq 0\) constraint is a significantly smaller LMI than \(Z(\gamma) \preceq 0\), which is especially the case for deeper networks.

In the next theorem, we show that LipSDP and Chordal-LipSDP compute Lipschitz constants that are, in fact, \emph{identical}, i.e., a chordal decomposition of LipSDP gives no loss of accuracy over the original formulation.
\begin{theorem}
\label{thm:equiv-probs}
    The semidefinite programs~\eqref{eq:p1} and~\eqref{eq:p2} are equivalent: \(\gamma\) is a solution for~\eqref{eq:p1} iff \(\gamma, Z_1, \ldots, Z_p\) is a solution for~\eqref{eq:p2}.
    Moreover, their optimal values are identical.
\end{theorem}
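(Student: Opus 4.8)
The plan is to obtain Theorem~\ref{thm:equiv-probs} as a direct consequence of Lemma~\ref{lem:chordal-psd} combined with Theorems~\ref{thm:Z-sparsity} and~\ref{thm:max-cliques-Z}. The key observation is that the constraint $Z(\gamma) \preceq 0$ in~\eqref{eq:p1} is the same as $-Z(\gamma) \succeq 0$, and that negation does not alter a sparsity pattern: since $Z(\gamma) \in \mbb{S}^{N}(\mcal{E})$ by Theorem~\ref{thm:Z-sparsity}, we also have $-Z(\gamma) \in \mbb{S}^{N}(\mcal{E})$. By Theorem~\ref{thm:max-cliques-Z} the graph $\mcal{G}(\mcal{V}, \mcal{E})$ is chordal with maximal cliques $\{\mcal{C}_1, \ldots, \mcal{C}_p\}$, so Lemma~\ref{lem:chordal-psd} applies verbatim to $-Z(\gamma)$, and the block-index selectors it produces are exactly the $E_{\mcal{C}_k}$ appearing in~\eqref{eq:p2}.

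Carrying this out, I would fix any $\gamma \geq 0$ and apply Lemma~\ref{lem:chordal-psd} to $X = -Z(\gamma) \in \mbb{S}^{N}(\mcal{E})$: it gives $-Z(\gamma) \succeq 0$ if and only if there exist $Y_k \in \mbb{S}^{\abs{\mcal{C}_k}}$ with $Y_k \succeq 0$ and $-Z(\gamma) = \sum_{k=1}^{p} E_{\mcal{C}_k}^\top Y_k E_{\mcal{C}_k}$. Setting $Z_k \coloneqq -Y_k$ and multiplying the decomposition identity by $-1$, this is equivalent to the existence of $Z_1, \ldots, Z_p$ with $Z_k \preceq 0$ and $Z(\gamma) = \sum_{k=1}^{p} E_{\mcal{C}_k}^\top Z_k E_{\mcal{C}_k}$, which are precisely the constraints of~\eqref{eq:p2} beyond $\gamma \geq 0$. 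Hence $\gamma$ is feasible for~\eqref{eq:p1} if and only if $\gamma$ can be extended to a feasible point $(\gamma, Z_1, \ldots, Z_p)$ of~\eqref{eq:p2}; both directions follow from the two directions of Lemma~\ref{lem:chordal-psd}, the ``if'' direction being the trivial fact that a sum of congruences of $\preceq 0$ blocks is $\preceq 0$.

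The remaining step is bookkeeping on the objective. Both programs minimize $\gamma_\ell$, which depends on $\gamma$ alone, and the feasibility equivalence shows that the projections of the two feasible sets onto the $\gamma$-coordinate coincide. Therefore the optimal values agree; moreover, if $\gamma^\star$ solves~\eqref{eq:p1}, any completion $(\gamma^\star, Z_1, \ldots, Z_p)$ to a feasible point of~\eqref{eq:p2} (which exists by the equivalence) attains the common optimal value and hence solves~\eqref{eq:p2}, while conversely the $\gamma$-component of any solution of~\eqref{eq:p2} is feasible for~\eqref{eq:p1} with the same objective value and hence solves it. This is exactly the stated ``iff''.

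I do not anticipate a genuine obstacle: once Theorems~\ref{thm:Z-sparsity} and~\ref{thm:max-cliques-Z} are established, the argument reduces to one invocation of the chordal positive-semidefinite completion lemma together with a sign flip. The points requiring care are pedantic ones: confirming that the cliques used in~\eqref{eq:p2} are literally the maximal cliques of Theorem~\ref{thm:max-cliques-Z} so that Lemma~\ref{lem:chordal-psd} applies without re-deriving the decomposition, and checking that the constraint $\gamma \geq 0$ is identical in both problems so that no feasible points are gained or lost in the reduction.
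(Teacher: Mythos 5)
Your proposal is correct and follows essentially the same route as the paper: one invocation of Lemma~\ref{lem:chordal-psd} applied to \(-Z(\gamma)\), using the sparsity and chordality established in Theorems~\ref{thm:Z-sparsity} and~\ref{thm:max-cliques-Z}. You are merely more explicit than the paper about the sign flip and the projection-onto-\(\gamma\) bookkeeping, which the paper's one-line proof leaves implicit.
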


\section{Experiments}
\label{sec:experiments}

In this section, we evaluate the effectiveness of Chordal-LipSDP.
We aim to answer the following questions:
\begin{itemize}
    \item[] \textbf{(Q1)} How well does Chordal-LipSDP scale in comparison to the baseline methods?
    
    \item[] \textbf{(Q2)} How does the computed Lipschitz constant vary as the sparsity parameter \(\tau\) increases?
    
    
\end{itemize}

(\textbf{Dataset})
We use a randomly generated batch of neural networks with random weights from \(\mcal{N}(0, 1/2)\), with depth \(K = d\), widths \(n_2 = \cdots = n_K = w\), and input-output \(n_1 = m = 2\),
for \(w \in \{10, \ldots, 50\}\) and \(d \in \{5, 10, \ldots, 50\}\).
As a naming convention, for instance, W30-D20 would be the random network with \(w = 30\) and \(d = 20\).
In total, there are \(50\) such random networks.

(\textbf{Baseline Methods})
We compare Chordal-LipSDP against the following baselines: 
\begin{itemize}
    \item LipSDP: as in~\eqref{eq:p1}, using the same values of \(\tau\)
    \item Naive-Lip: by taking \(L = \prod_{k = 1}^{K} \norm{W_k}_2\)
    \item CP-Lip~\cite{combettes2020lipschitz}, which scales exponentially with depth.
    To the best of our knowledge, this is the only\footnote{The method of~\cite{virmaux2018lipschitz} is not a true upper-bound of the Lipschitz constant, although it is often such in practice as demonstrated in~\cite{fazlyab2019efficient}.
    The method of~\cite{jordan2020exactly} assumes piecewise linear activations.} other method that can handle general activation functions while yielding a non-trivial bound.    
\end{itemize}

(\textbf{System})
All experiments were run on an Intel i9-9940X with 28 cores and 125 GB of RAM.
We used MOSEK 9.3 as our convex solver with a solver accuracy \(\varepsilon = 10^{-6}\).

\subsection{(Q1) Runtime of Chordal-LipSDP vs Baselines}

We first evaluate the runtime of Chordal-LipSDP against the baselines of LipSDP, Naive-Lip, and CP-Lip.
For each random network, we ran both Chordal-LipSDP and LipSDP with sparsity parameter values of \(\tau = 0, \ldots, 6\) and recorded their respective runtimes in Figure~\ref{fig:facet-plot}.
Because Naive-Lip and CP-Lip do not depend on the sparsity parameter \(\tau\), they, therefore, appear as constant times for all sparsities; we omit plotting the Naive-Lip times because they are \(< 0.1\) seconds for all networks.
Moreover, because the runtime of CP-Lip scales exponentially with the number of layers, we only ran CP-Lip for networks of depth \(\leq 25\).

\begin{figure*}[h]
\centering

\includegraphics[width=\textwidth]{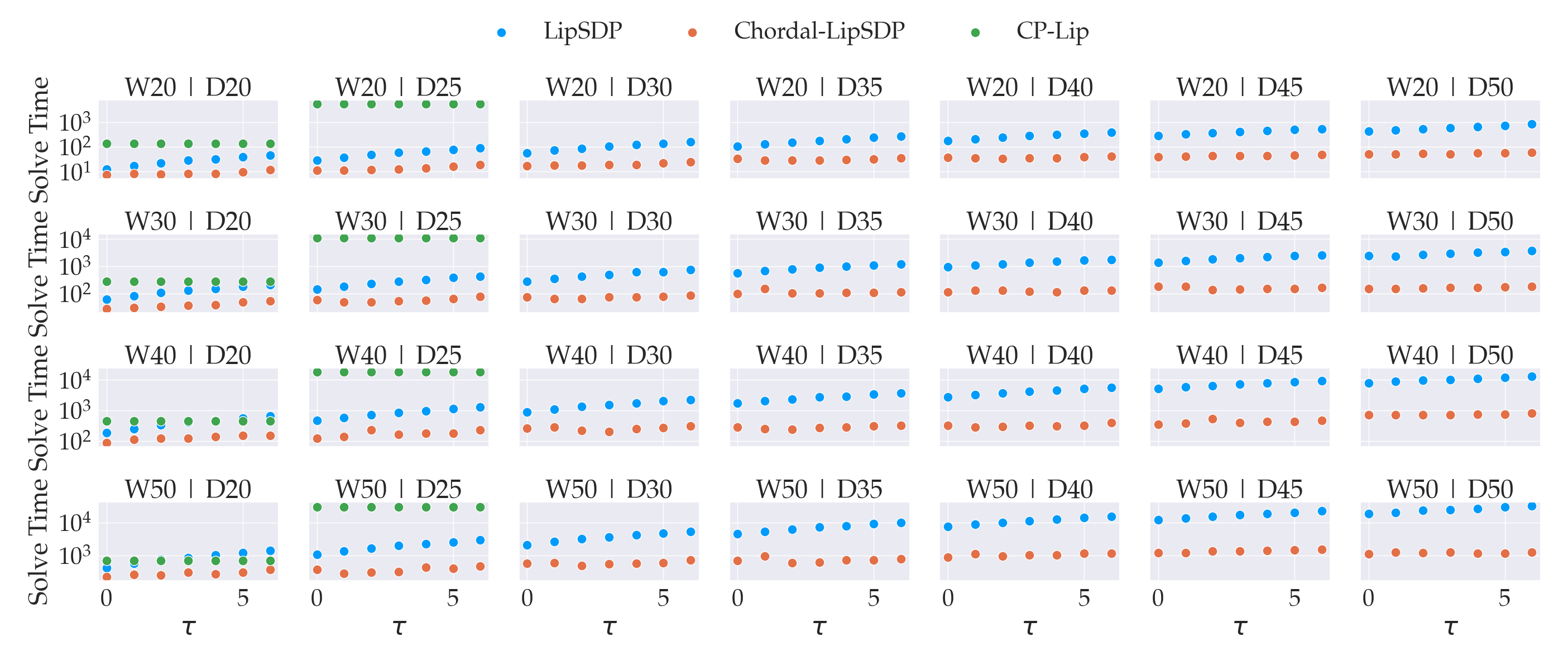}


\caption{
    The runtimes (seconds) of Chordal-LipSDP, LipSDP, and CP-Lip on a subset of the networks.
    The times for Naive-Lip are omitted because it finishes in \(< 0.1\) seconds on all instances.
    We ran Chordal-LipSDP and LipSDP for \(\tau = 0, \ldots, 6\).
    Because CP-Lip is independent of \(\tau\), it is a constant line.
    Moreover, due to the scaling exponentially with respect to the number of layers, we only ran CP-Lip for networks of depth \(\leq 25\).
}
\label{fig:facet-plot}
\end{figure*}

\begin{figure}[h]
\centering

\includegraphics[width=0.48\textwidth]{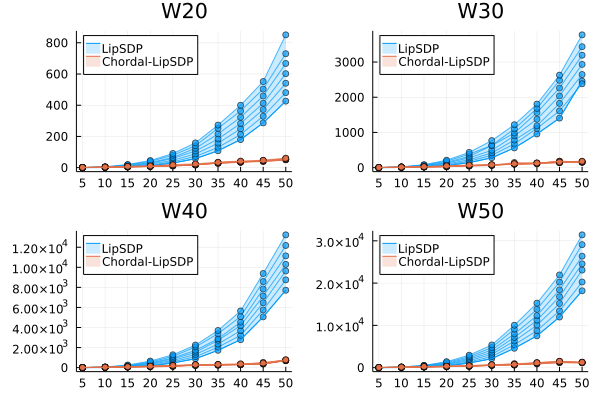}

\caption{
    The runtimes (seconds) of LipSDP and Chordal-LipSDP as the depth varies.
    Each plot shows networks that share the same width, but whose depths are varied on the x-axis.
    Each curve shows the runtimes for a different value of \(\tau = 0, \ldots, 6\), with higher curves corresponding to higher runtimes --- and in this case also higher values of \(\tau\).
    We shade the region between the \(\tau = 0\) and \(\tau = 6\) curves for each method.
}
\label{fig:fixed-widths}
\end{figure}

\begin{figure}[h]
\centering

\includegraphics[width=0.48\textwidth]{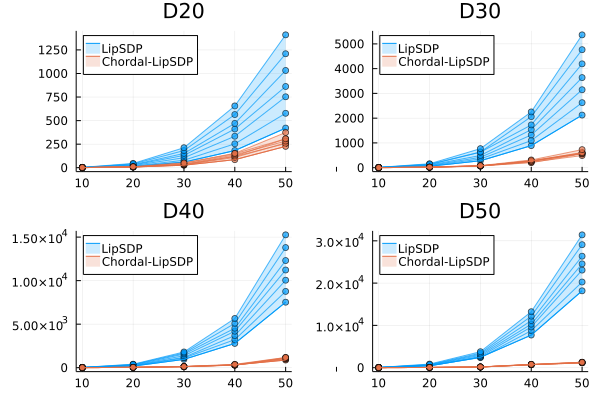}

\caption{
    The runtimes (seconds) of LipSDP and Chordal-LipSDP as the width varies.
    Similar to Figure~\ref{fig:fixed-widths}, but the x-axis now shows varying widths.
}
\label{fig:fixed-depths}
\end{figure}

Figure~\ref{fig:facet-plot} gives a general comparison for scalability between LipSDP and Chordal-LipSDP.
We further record the runtimes of these two methods when the width of the network is fixed, and the depths are varied in Figure~\ref{fig:fixed-widths}, as well as when the depth is fixed, and the widths may vary in Figure~\ref{fig:fixed-depths}.

As the network depth increases, Chordal-LipSDP significantly out-scales LipSDP, especially for networks of depth \(\geq 20\).
Moreover, Chordal-LipSDP also achieves better scaling for higher values of \(\tau\) compared to LipSDP.
In general, Naive-Lip is consistently the fastest method, while CP-Lip is initially fast but quickly falls off on deep networks due to exponential scaling with depth.

\subsection{(Q2) Lipschitz Constant vs Sparsity Parameter}

We also studied how the value of \(\tau\) affects the resulting Lipschitz constant and plot the results in Figure~\ref{fig:lc-taus}.
In particular, as \(\tau\) increases, the estimate rapidly improves by at least an order of magnitude.
Moreover, the Lipschitz constant estimate is also better than Naive-Lip and CP-Lip --- when the runtime would be reasonable (depth \(\leq 25\)).

\begin{figure}[H]
\centering

\includegraphics[width=0.48\textwidth]{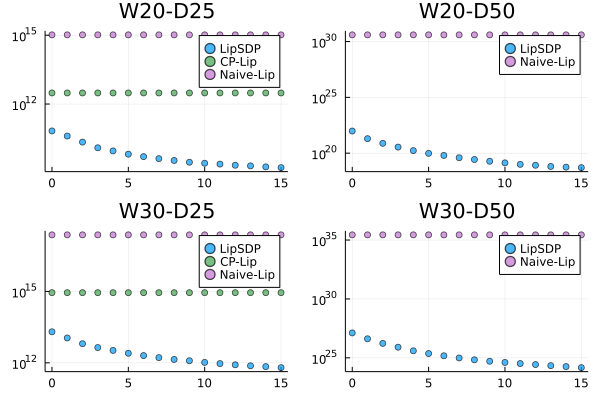}

\caption{
    The Lipschitz constant estimate given by LipSDP (the same as Chordal-LipSDP) on some networks, with \(\tau\) on the x-axis.
    On the left we also plot the estimates given by CP-Lip (green) and Naive-Lip (purple).
}
\label{fig:lc-taus}
\end{figure}

\subsection{Discussion}
Our experiments show that Chordal-LipSDP out-scales LipSDP on deeper networks, but this is not necessarily the case for shallower networks, e.g., when depth \(\leq 10\).
This is likely because the overhead of creating many smaller constraints of the form \(Z_k \preceq 0\), as well as a large equality constraint \(Z(\gamma) = \sum_{k = 1}^{p} E_{\mcal{C}_k} ^\top Z_k E_{\mcal{C}_k}\) may only be worthwhile when there are sufficiently many maximal cliques, i.e., when the network is deep.
This also means that Chordal-LipSDP is more resource-intensive than LipSDP: we found that on W50-D50, using \(\tau \geq 8\) will sometimes cause our machine to automatically terminate the process.

For LipSDP, we found that solving the dual problem was almost always significantly faster than solving the primal.
However, this dualization did not yield noticeable benefits for Chordal-LipSDP, so all instances of Chordal-LipSDP were solved for the primal.

Additionally, we found it helpful to scale the weights of \(W_k\) to ensure that the solver receives a sufficiently well-conditioned problem, especially for larger problem instances.

\section{Conclusions}

We present Chordal-LipSDP, a chordally sparse variant of LipSDP, for estimating the Lipschitz constant of a feedforward neural network.
We give a precise characterization of the sparsity structure present, and using this, we decompose a large semidefinite constraint of LipSDP --- which is its main computational bottleneck --- into an equivalent collection of smaller constraints.

Our numerical experiments show that Chordal-LipSDP significantly out-scales LipSDP, especially on deeper networks.
Moreover, our formulation introduces a tunable sparsity parameter that allows the user to finely trade off accuracy and scalability: it is often possible to gain rapidly tightening estimates of the Lipschitz constant without a major performance penalty.

\bibliographystyle{IEEEtran}
\bibliography{sources}

\appendix

\subsection{Proof of Theorem \ref{thm:Z-sparsity}}


To simplify and formalize the proof, we first need to introduce some useful notation. We extend the definition of sparsity patterns to general matrices.
Let \(\mcal{E} \subseteq \{1, \ldots, m\} \times \{1, \ldots, n\}\) and define analogously to~\eqref{eq:SE}:
\begin{align}
\label{eq:ME}
    \mbb{M}^{m \times n} (\mcal{E})
    \coloneqq
    \{M \in \mbb{R}^{m \times n} : 
        \text{\(M_{ij} = 0\) if \((i,j) \not\in \mcal{E}\)}\},
\end{align}
and for \((i,j) \in \mcal{E}\) associated with an \(m \times n\) matrix we will assume that \(1 \leq i \leq m\) and \(1 \leq j \leq n\).
When \(m = n\), we simply write \(\mbb{M}^{n}\).
Let \(\mcal{E}^\top\) be the transpositioned (inverse) pairs of \(\mcal{E}\), and note that \(\mcal{E} = \mcal{E}^\top\) iff \(\mcal{E}\) is symmetric.
We will explicitly distinguish between symmetric and nonsymmetric \(\mcal{E}\) when necessary. Whenever we write \(\mbb{S}^n (\mcal{E})\) it is implied that \(\mcal{E}\) is symmetric, and that the undirected graph \(\mcal{G}(\mcal{V}, \mcal{E})\) is therefore well-defined.

 In the remainder, we also use the following notation
\begin{align*}
    k_i \coloneqq \min \{k : S(k) \geq i\},
    \quad
    1 \leq i \leq N.
\end{align*}
There are a few useful properties for $k_i$ that we remark:
\begin{itemize}
    \item \(k_i\) is the index of \((n_1, \ldots, n_K)\) that \(1 \leq i \leq N\) falls in.
    \item If \(i \leq j\), then \(k_i \leq k_j\).
    \item \(S(k_i - 1) \leq i \leq S(k_i)\) for all \(1 \leq i \leq N\).
\end{itemize}
Also, some rules of sparse matrix arithmetics are as follows:
{\small\begin{align*}
    A \in \mbb{M}^{m \times n} (\mcal{E})
        &\implies A^\top \in \mbb{M}^{n \times m} (\mcal{E}^\top) \\
    A \in \mbb{M}^{n} (\mcal{E}_A), B \in \mbb{M}^{n} (\mcal{E}_B)
        &\implies A + B \in \mbb{M}^{n} (\mcal{E}_A \cup \mcal{E}_B) \\
    A \in \mbb{M}^{n} (\mcal{E})
        &\implies A + A^\top \in \mbb{S}^n (\mcal{E} \cup \mcal{E}^{\top})
\end{align*}}

To prove Theorem \ref{thm:Z-sparsity}, we need to show that \(Z(\gamma) \in \mbb{S}^N (\mcal{E})\).
Note first that \(Z(\gamma)\)  can be expressed as:
\begin{align*}
    Z(\gamma) &= A^\top T A + B^\top T B + A^\top T B + B^\top T A \\
    &\quad + E_K ^\top W_K ^\top W_K E_K - \gamma_l E_1 ^\top E_1.
\end{align*}
The proof of Theorem~\ref{thm:Z-sparsity} follows five steps and analyzes sparsity of each term in \(Z(\gamma)\) separately. For better readability, we summarize these five steps next and provide detailed proofs for each step in separate lemmas.

\textbf{Step 1.}  We  construct the edge set \(\mcal{E}_B \coloneqq \bigcup_{k = 1}^{K-1} \mcal{E}_{B,k}\) where
\begin{align*}
    \mcal{E}_{B, k}
        \coloneqq \big\{(i, j) :
        &\enskip S(k-1) + 1 \leq j \leq S(k), \\
        &\enskip S(k) - \tau + 1 \leq i \leq S(k+1) + \tau
        \big\}.
\end{align*} In Lemma \ref{lem:calB}, we show that \(B^\top T A \in \mbb{M}^N (\mcal{E}_B)\). By symmetry, it then also holds that \(A^\top T B \in \mbb{M}^{N} (\mcal{E}_B ^{\top})\).
    
\textbf{Step 2.} By construction, each \(\mcal{E}_{B, k}\) has dense entries only in the column range \(S(k-1) + 1 \leq j \leq S(k)\), which means that \(\mcal{E}_{B, k} \cap \mcal{E}_{B, k'} = \emptyset\) when \(k \neq k'\).
The goal now is to show that \(B^\top T A + A^\top T B\) is in a sense the ``frontier'' of growth for \(Z(\gamma)\) as \(\tau\) increases, as seen in Figure~\ref{fig:Ztaus}.
To more easily analyze the growth pattern of \(\mcal{E}_B \cup \mcal{E}_B^\top\), we define an over-approximation \( \mcal{E}_C \coloneqq \bigcup_{k = 1}^{K} \mcal{E}_{C, k} \supseteq\mcal{E}_{B}\) where
\begin{align*}
    \mcal{E}_{C, k}
        \coloneqq \big\{ (i,j) :
            &\enskip S(k-1) + 1 \leq j \leq S(k), \\
            &\enskip 1 \leq i \leq S(k+1) + \tau \big\}.
\end{align*}
Each \(\mcal{E}_{C, k}\) is similar to \(\mcal{E}_{B, k}\), but with the \(i\) index range relaxed.
In addition the union is up to \(K\), which is the depth of the neural network.
\(\mcal{E}_C\) is then a stair-case like sparsity pattern where the top side is dense (resp. the left side of \(\mcal{E}_C ^\top\) is dense), and so \(\mcal{E}_C \cap \mcal{E}_C ^\top\) is an overlapping block diagonal structure. Our goal is now to show that each term of \(Z(\gamma)\) has sparsity \(\mcal{E}_C \cap \mcal{E}_C ^\top\), beginning with \(B^\top T A + A^\top T B\). In Lemma~\ref{lem:calB-subset}, we show that \(\mcal{E}_B \cup \mcal{E}_B ^\top \subseteq \mcal{E}_C \cap \mcal{E}_C ^\top\). By Step 1, it consequently follows that 
 \begin{align*}B^\top T A + A^\top T B
            \in \mbb{S}^N (\mcal{E}_B \cup \mcal{E}_B^\top)
            \subseteq \mbb{S}^N (\mcal{E}_C \cap \mcal{E}_C ^\top).
    \end{align*}

\textbf{Step 3.} Let us next define the edge set \(\mcal{E}_A\) as
 \begin{align*}
    \mcal{E}_A \coloneqq \big\{(i, j) :
        &\enskip S(k_j) - \tau + 1 \leq S(k_i+1), \\
        &\enskip S(k_i) - \tau + 1 \leq S(k_j+1)
        \big\}.
    \end{align*}
    In Lemma~\ref{lem:calA}, we show that
    \begin{align*}
        A^\top T A + E_K ^\top W_K ^\top W_K E_K - \gamma_{\ell} E_1 ^\top E_1
        \in \mbb{S}^N (\mcal{E}_A),
    \end{align*}
    while we show that \(\mcal{E}_A \subseteq \mcal{E}_C \cap \mcal{E}_C ^\top\) in Lemma~\ref{lem:calA-subset}.
    
\textbf{Step 4.} For the  remaining term $B^\top T B$ of $Z(\gamma)$, we show that \(B^\top T B \in \mbb{S}^{N} (\mcal{E}_C \cap \mcal{E}_C ^\top)\) in Lemma~\ref{lem:BTB-subset}.
    
\textbf{Step 5.} The previous steps imply that \(Z(\gamma) \in \mbb{S}^N (\mcal{E}_C \cap \mcal{E}_C ^\top)\). Particularly, by Lemmas~\ref{lem:calB-subset}, \ref{lem:calA-subset}, and \ref{lem:BTB-subset}, each term has sparsity \(\mcal{E}_C \cap \mcal{E}_C ^\top\), and therefore so does their sum. Finally, we show that \(\mcal{E}_C \cap \mcal{E}_C ^\top \subseteq \mcal{E}\) in Lemma~\ref{lem:Eks}. This therefore means that \(Z(\gamma) \in \mbb{S}^N (\mcal{E})\) and concludes the proof. \hfill$\square$



\subsection{Statement and proof of Lemma \ref{lem:calB}}
\begin{lemma}
\label{lem:calB}
It holds that \(B^\top T A \in \mbb{M}^{N} (\mcal{E}_B)\).
\end{lemma}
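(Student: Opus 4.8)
The plan is to expand \((B^\top T A)_{ij} = \sum_{\alpha, \beta} (B^\top)_{i\alpha}\, T_{\alpha\beta}\, A_{\beta j}\) and to determine, from the sparsity of each factor, which positions \((i,j)\) can possibly receive a nonzero summand. The first task is therefore to record the sparsity of the three factors. Since \(B \mbf{x} = \mrm{vcat}(x_2, \ldots, x_K)\) merely reads off the trailing \(N_f\) coordinates of \(\mbf{x}\), we have \(B = \bracks{0\ \ I_{N_f}}\), hence \((B^\top)_{i\alpha} \neq 0\) only when \(i = S(1) + \alpha\) (recall \(n_1 = S(1)\)); in any nonzero summand this forces \(i \geq S(1)+1\) and pins \(\alpha = i - S(1)\). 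By its construction \(T\) is \(\tau\)-banded, so \(T_{\alpha\beta} \neq 0\) only when \(\abs{\alpha - \beta} \leq \tau\). And \(A\) is block-structured: its \(\ell\)-th block row occupies feature indices \(S(\ell) - S(1) + 1, \ldots, S(\ell+1) - S(1)\), is supported only in global columns \(S(\ell-1)+1, \ldots, S(\ell)\), and its final block column is zero; thus \(A_{\beta j} \neq 0\) only when \(k_j \leq K-1\) and \(S(k_j) - S(1) + 1 \leq \beta \leq S(k_j + 1) - S(1)\).

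Next I would combine these constraints on a single nonzero summand. Fix \((i,j)\) with \((B^\top T A)_{ij} \neq 0\) and set \(k \coloneqq k_j\); then automatically \(S(k-1)+1 \leq j \leq S(k)\), and from the \(A\)-factor \(k \leq K-1\). The \(B^\top\)-factor gives \(\alpha = i - S(1)\) and \(i \geq S(1)+1\); the \(A\)-factor supplies \(\beta\) with \(S(k) - S(1) + 1 \leq \beta \leq S(k+1) - S(1)\); the \(T\)-factor demands \(\abs{(i - S(1)) - \beta} \leq \tau\). For such a \(\beta\) to exist, the interval \(\bracks{i - S(1) - \tau,\ i - S(1) + \tau}\) must intersect \(\bracks{S(k) - S(1) + 1,\ S(k+1) - S(1)}\), which rearranges to \(i \leq S(k+1) + \tau\) and \(i \geq S(k) - \tau + 1\). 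Together with the column bounds on \(j\) and \(k \leq K-1\), this is precisely the statement that \((i,j) \in \mcal{E}_{B,k} \subseteq \mcal{E}_B\). Since \((i,j)\) ranged over all positions where \(B^\top T A\) is nonzero, \(B^\top T A \in \mbb{M}^N(\mcal{E}_B)\) follows.

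I expect the only real friction to be index bookkeeping rather than anything conceptual: keeping the shift \(n_1 = S(1)\) between feature-space indices (range \(1, \ldots, N_f\)) and global indices (range \(1, \ldots, N\)) straight throughout, and noting that the feature-block interval produced by the \(A\)-factor already lies inside \(\braces{1, \ldots, N_f}\), so that no separate endpoint argument is needed to justify the existence of \(\beta\) once the two intervals overlap. With the per-factor patterns set up correctly, the remainder is a mechanical composition.
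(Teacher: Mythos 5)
Your proof is correct and follows essentially the same route as the paper's: both arguments track the support of \(B^\top T A\) by composing the \(\tau\)-banded pattern of \(T\) with the block structure of \(A\) and the index shift by \(n_1\) induced by \(B^\top\), arriving at exactly the bounds defining \(\mcal{E}_{B,k}\). The only difference is presentational --- you expand the product entrywise over the intermediate indices \(\alpha,\beta\), whereas the paper works block column by block column via \(A = \sum_k A_k E_k\) --- and your interval-intersection bookkeeping checks out.
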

\begin{proof}
    We analyze the action of \(B^\top T\) on each block column \(A_k \in \mbb{R}^{N_f \times n_k}\) of \(A\) separately, where,
    \begin{align*}
        A = \begin{bmatrix} A_1 & \cdots & A_{K-1} & 0 \end{bmatrix},
        \quad A = \sum_{k = 1}^{K-1} A_k E_k.
    \end{align*}
    Since \(W_k \in \mbb{R}^{n_{k+1} \times n_k}\), the entry \((A_k)_{ij}\) is dense iff
    \begin{align*}
        S(k) - n_1 + 1 \leq i \leq S(k+1) - n_1,
    \end{align*}
    and there is no condition on \(j\) because each column of \(A_k\) has at least one dense entry.
    Observe that \(T\) is a \(\tau\)-banded matrix, and therefore has the same sparsity as \(R + R^\top\), where
    \begin{align*}
        R \coloneqq I + U + \cdots + U^\tau,
        \quad \text{\(U\) is the upper shift matrix.}
    \end{align*}
    Thus \((TA_k)_{ij}\) is dense iff
    \begin{align*}
        S(k) - n_1 - \tau + 1 \leq i \leq S(k+1) - n_1 + \tau.
    \end{align*}
    Finally, left-multiplication by \(B^\top\) pads a zero block of height \(n_1\) at the top, and so \((B^\top T A_k)_{ij}\) is dense iff
    \begin{align*}
        S(k) - \tau + 1 \leq i \leq S(k+1) + \tau.
    \end{align*}
    Right-multiplication by \(E_k\) puts \(A_k\) into the \(k\)th block column of \((n_1, \ldots, n_K)\), so \((B^\top T A_k E_k)_{ij}\) is dense iff
    \begin{align*}
        S(k-1) + 1 &\leq j \leq S(k), \\
        \quad
        S(k) - \tau + 1 &\leq i \leq S(k+1) + \tau
    \end{align*}
    which shows that \(B^\top T A_k E_k\) has sparsity \(\mcal{E}_{B, k}\).
    Thus,    
    \begin{align*}
        B^\top T A = \sum_{k = 1}^{K-1} B^\top T A_k E_k
            \in \mbb{M}^N\parens{\bigcup_{k =1}^{K-1} \mcal{E}_{B, k}}
            = \mbb{M}^N (\mcal{E}_{B}).
    \end{align*}
\end{proof}

By symmetry we also have that \(A^\top T B \in \mbb{M}^N (\mcal{E}_B ^\top)\);
the dense blocks of \(B^\top T A\) grow vertically with \(\tau\), and those of \(A^\top T B\) grow horizontally.

   \subsection{Statement and proof of Lemma \ref{lem:calB-subset}}
    \begin{lemma}
\label{lem:calB-subset}
It holds that    \(\mcal{E}_B \cup \mcal{E}_B ^\top \subseteq \mcal{E}_C \cap \mcal{E}_C ^\top\).
\end{lemma}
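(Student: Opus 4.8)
\textbf{Proof proposal for Lemma~\ref{lem:calB-subset}.}
The plan is to show the set inclusion \(\mcal{E}_B \cup \mcal{E}_B^\top \subseteq \mcal{E}_C \cap \mcal{E}_C^\top\) by unraveling the definitions of the edge sets and performing a direct index chase. Since \(\mcal{E}_B = \bigcup_{k=1}^{K-1} \mcal{E}_{B,k}\) and \(\mcal{E}_C = \bigcup_{k=1}^{K} \mcal{E}_{C,k}\), and taking transposes commutes with unions, it suffices to fix \(k\) and show two things: first, \(\mcal{E}_{B,k} \subseteq \mcal{E}_C\), and second, \(\mcal{E}_{B,k} \subseteq \mcal{E}_C^\top\), i.e.\ \(\mcal{E}_{B,k}^\top \subseteq \mcal{E}_C\). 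Combining these two for all \(k\) gives \(\mcal{E}_B \subseteq \mcal{E}_C \cap \mcal{E}_C^\top\), and then taking transposes of the whole inclusion yields \(\mcal{E}_B^\top \subseteq \mcal{E}_C^\top \cap \mcal{E}_C = \mcal{E}_C \cap \mcal{E}_C^\top\), so their union is contained in \(\mcal{E}_C \cap \mcal{E}_C^\top\) as well.

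First I would take an arbitrary \((i,j) \in \mcal{E}_{B,k}\), so that \(S(k-1)+1 \leq j \leq S(k)\) and \(S(k)-\tau+1 \leq i \leq S(k+1)+\tau\). The key observation is that the column index \(j\) already matches the column range of \(\mcal{E}_{C,k}\) exactly, namely \(S(k-1)+1 \leq j \leq S(k)\), so to conclude \((i,j) \in \mcal{E}_{C,k} \subseteq \mcal{E}_C\) it only remains to check the row bound \(1 \leq i \leq S(k+1)+\tau\); the upper bound is immediate from \(i \leq S(k+1)+\tau\), and the lower bound follows since \(i \geq S(k)-\tau+1\) together with the standing assumption (noted right after Theorem~\ref{thm:Z-sparsity}) that all index pairs are implicitly clamped to \(1 \leq i,j \leq N\). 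That settles \(\mcal{E}_{B,k} \subseteq \mcal{E}_C\).

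For the second inclusion, I would take \((i,j) \in \mcal{E}_{B,k}^\top\), i.e.\ \((j,i) \in \mcal{E}_{B,k}\), so now \(S(k-1)+1 \leq i \leq S(k)\) and \(S(k)-\tau+1 \leq j \leq S(k+1)+\tau\); the goal is to exhibit an index \(k'\) with \((i,j) \in \mcal{E}_{C,k'}\). Here the role of the "extra" term \(\mcal{E}_{C,K}\) in the union for \(\mcal{E}_C\) (the union runs to \(K\) rather than \(K-1\)) and the staircase geometry of Figure~\ref{fig:Ztaus} becomes relevant: the natural choice is \(k' = k_j\), the layer index in which the row coordinate \(j\) lands, so that \(S(k_j-1)+1 \leq j \leq S(k_j)\) matches the column range of \(\mcal{E}_{C,k_j}\) after relabeling rows and columns — wait, I must be careful with which coordinate plays which role. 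Concretely, for \((i,j) \in \mcal{E}_{C,k'}\) I need \(S(k'-1)+1 \leq j \leq S(k')\) and \(1 \leq i \leq S(k'+1)+\tau\). Taking \(k' = k_j\) gives the first condition by the third bulleted property of \(k_i\). For the second, I must show \(i \leq S(k_j+1)+\tau\); since \(i \leq S(k)\) and \(j \geq S(k)-\tau+1\) forces \(S(k_j) \geq j \geq S(k)-\tau+1\), hence \(k_j \geq k\) (after a short argument via monotonicity of \(S\) and \(\tau \geq 0\)), and therefore \(S(k_j+1) \geq S(k+1) \geq S(k) \geq i\), giving \(i \leq S(k_j+1) \leq S(k_j+1)+\tau\). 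This shows \((i,j) \in \mcal{E}_{C,k_j} \subseteq \mcal{E}_C\), completing \(\mcal{E}_{B,k}^\top \subseteq \mcal{E}_C\).

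The main obstacle I anticipate is purely bookkeeping: keeping straight which of \(i,j\) is the row and which is the column under each transpose, and carefully justifying the inequality \(k_j \geq k\) from \(j \geq S(k)-\tau+1\) — this step needs \(\tau \geq 0\) so that \(S(k)-\tau+1 \leq S(k)\), whence any \(j\) at least that large has \(S(k_j) \geq j\) compatible with \(k_j \geq k\), but one should phrase it cleanly, perhaps by noting that if \(k_j < k\) then \(j \leq S(k_j) \leq S(k-1) < S(k)-\tau+1\) when \(\tau \leq n_k - 1\), and handle the edge case \(\tau \geq n_k\) separately or via the implicit clamping. Apart from this edge-case care, the proof is a routine interval-containment verification driven entirely by the staircase picture.
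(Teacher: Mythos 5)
Your overall strategy is the paper's: reduce to showing \(\mcal{E}_{B,k}\subseteq\mcal{E}_C\) and \(\mcal{E}_{B,k}\subseteq\mcal{E}_C^\top\) for each \(k\), dispatch the first via \(\mcal{E}_{B,k}\subseteq\mcal{E}_{C,k}\), and for the second use the clique indexed by the block in which the "long" coordinate lands (your target \(\mcal{E}_{C,k_j}\) is the transpose of the paper's \(\mcal{E}_{C,k_i}^\top\)). The first inclusion is fine. The problem is in the second: your argument routes through the intermediate claim \(k_j\geq k\), which is false once \(\tau\geq n_k+1\). For example, with all widths equal to \(3\) and \(\tau=4\), the choice \(j=S(k)-3=S(k-1)\) satisfies \(j\geq S(k)-\tau+1\) but has \(k_j=k-1<k\), so your chain \(S(k_j+1)\geq S(k+1)\geq S(k)\geq i\) collapses. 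You flag this yourself ("handle the edge case \(\tau\geq n_k\) separately or via the implicit clamping") but never actually handle it, and the implicit clamping to \(1\leq i,j\leq N\) does not help here. Since \(\tau\) is allowed to range up to \(N_f-1\), this is not a negligible corner case, so as written the proof has a genuine gap.

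The gap is easily closed, and the paper's proof shows how: do not compare \(k_j\) with \(k\) at all. From \((j,i)\in\mcal{E}_{B,k}\) you have \(i\leq S(k)\) and \(j\geq S(k)-\tau+1\), hence \(S(k)\leq j+\tau-1\); combining with \(j\leq S(k_j)\) gives \(i\leq S(k)\leq j+\tau-1\leq S(k_j)+\tau-1\leq S(k_j+1)+\tau\), which is exactly the row bound needed for \((i,j)\in\mcal{E}_{C,k_j}\), with no case split on \(\tau\). This is precisely the paper's inequality chain (stated there for \((i,j)\in\mcal{E}_{B,k}\) with target \(\mcal{E}_{C,k_i}^\top\), i.e., your argument transposed). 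With that one substitution your proof is correct and coincides with the paper's.
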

\begin{proof}
    We show that \(\mcal{E}_{B, k} \subseteq \mcal{E}_C\) and \(\mcal{E}_{B, k} \subseteq \mcal{E}_C ^\top\) for any \(1 \leq k \leq K - 1\).
    It suffices to consider only \(\mcal{E}_{B, k}\) because \(\mcal{E}_C \cap \mcal{E}_C ^\top\) is symmetric, and would therefore also contain \(\mcal{E}_{B, k} ^\top\).
    
    To show that \(\mcal{E}_{B, k} \subseteq \mcal{E}_C\), observe that \(\mcal{E}_{B, k} \subseteq \mcal{E}_{C, k}\).
    To show that \(\mcal{E}_{B, k} \subseteq \mcal{E}_C^\top\), consider \((i,j) \in \mcal{E}_{B, k}\), and we claim that \((i,j) \in \mcal{E}_{C, k_i} ^\top\), for which we need to satisfy
    \begin{align*}
        S(k_i - 1) + 1 \leq i \leq S(k_i),
        \quad
        1 \leq j \leq S(k_i + 1) + \tau.
    \end{align*}
    The LHS inequalities follow from the previously stated properties of \(k_i\).
    For the RHS inequalities deduce the following from the definition of \(\mcal{E}_{B, k}\):
    \begin{align*}
        j \leq S(k), \enskip S(k) - \tau + 1 \leq i
        \implies j \leq S(k) \leq i + \tau - 1,
    \end{align*}
    and since \(i \leq S(k_i + 1)\) we have
    \begin{align*}
        j \leq i + \tau - 1 \leq S(k_i + 1) + \tau,
    \end{align*}
    meaning that \((i, j) \in \mcal{E}_{C, k_i} ^\top \subseteq \mcal{E}_C ^\top\).
\end{proof}

\subsection{Statement and proof of Lemma \ref{lem:calA}}
\begin{lemma}
\label{lem:calA}
It holds that    
\begin{align*}
A^\top T A + E_K ^\top W_K ^\top W_K E_K - \gamma_{\ell} E_1^\top E_1 \in \mbb{S}^N (\mcal{E}_A).
\end{align*}
   
\end{lemma}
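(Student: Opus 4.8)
The plan is to split the matrix into its three summands \(A^\top T A\), \(E_K^\top W_K^\top W_K E_K\), and \(-\gamma_\ell E_1^\top E_1\), determine a sparsity pattern for each, and combine them with the additive rule for sparse matrices. The guiding observation is that \(\mcal{E}_A\) depends on a pair \((i,j)\) only through \((k_i, k_j)\); that is, \(\mcal{E}_A\) is a \emph{block} pattern — whenever one entry of the \((k, k')\) block of the \(N \times N\) grid is allowed to be dense, the whole block is, and conversely. So it suffices to check, block by block, which \((k, k')\) blocks of each summand can be nonzero, and then confirm that exactly those blocks satisfy the two inequalities defining \(\mcal{E}_A\). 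Since each summand is symmetric and \(\mcal{E}_A = \mcal{E}_A^\top\), the conclusion will land in \(\mbb{S}^N(\mcal{E}_A)\) rather than merely \(\mbb{M}^N(\mcal{E}_A)\).

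First I would handle \(A^\top T A\). Writing \(A = \sum_{k=1}^{K-1} A_k E_k\), we get \(A^\top T A = \sum_{k, k'} E_k^\top (A_k^\top T A_{k'}) E_{k'}\), and the summand \(E_k^\top (A_k^\top T A_{k'}) E_{k'}\) is supported on the \((k, k')\) block, whose global row indices \(i\) satisfy \(k_i = k\) and column indices \(j\) satisfy \(k_j = k'\). From the proof of Lemma~\ref{lem:calB}, the dense rows of \(A_k\) occupy the \(N_f\)-indexed range \(S(k) - n_1 + 1 \le \cdot \le S(k+1) - n_1\), while \(T\) has the band sparsity \(\abs{r - s} \le \tau\). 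Hence \(A_k^\top T A_{k'}\) is a candidate-dense block iff the \(\tau\)-expansion of the row range of \(A_k\) meets the row range of \(A_{k'}\); the \(n_1\) offsets cancel, leaving precisely \(S(k) - \tau + 1 \le S(k'+1)\) and \(S(k') - \tau + 1 \le S(k+1)\), which is exactly the defining condition of \(\mcal{E}_A\) after substituting \(k = k_i\), \(k' = k_j\). Thus \(A^\top T A \in \mbb{S}^N(\mcal{E}_A)\).

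Next, \(E_K^\top W_K^\top W_K E_K\) is supported on the \((K,K)\) block only (as \(W_K^\top W_K \in \mbb{R}^{n_K \times n_K}\) may be full), and there \(k_i = k_j = K\); the requisite inequality \(S(K) - \tau + 1 \le S(K+1) = S(K) + m\) holds since \(m \ge 1\) and \(\tau \ge 0\), so this block lies in \(\mcal{E}_A\). Likewise \(\gamma_\ell E_1^\top E_1\), a nonnegative multiple of a matrix supported on the \((1,1)\) block, has \(k_i = k_j = 1\), and \(S(1) - \tau + 1 \le S(2) = n_1 + n_2\) holds since \(n_2 \ge 1\). By the additive rule the sum then lies in \(\mbb{S}^N\) of the union of these three patterns, each contained in \(\mcal{E}_A\), which gives the lemma.

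The main obstacle is purely bookkeeping: keeping straight the two index conventions — the \(N_f\)-indexed space on which \(T\) and the rows of \(A\) live versus the \(N\)-indexed space of the columns of \(A\) and of \(Z(\gamma)\) — and verifying that the ``\(\tau\)-expanded intervals overlap'' condition genuinely reduces to the algebraic inequalities of \(\mcal{E}_A\), with the \(n_1\) shift coming from the leading zero block-column of \(A\) cancelling cleanly. The degenerate cases \(\tau = 0\) and the first and last diagonal blocks should also be checked separately to make sure no off-by-one creeps in.
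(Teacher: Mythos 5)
Your proof is correct and follows essentially the same route as the paper's: both reduce the question to whether the $\tau$-expanded row supports of two block columns intersect, which yields exactly the two inequalities defining $\mcal{E}_A$. The only cosmetic difference is that you treat $E_K^\top W_K^\top W_K E_K$ and $-\gamma_\ell E_1^\top E_1$ as separately verified diagonal blocks, whereas the paper folds the former into a single product $W^\top \widehat{T} W$ with $\widehat{T} = \mrm{blockdiag}(T, I)$ and observes that the latter's support is already covered by $A^\top T A$.
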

\begin{proof}
    Note that \((\gamma_{\ell} E_1 ^\top E_1)_{ij}\) being dense implies that \((A^\top T A)_{ij}\) is dense, and therefore it suffices to show that
    \begin{align*}
        A^\top T A + E_K ^\top W_K ^\top W_K E_K
            &= W^\top \widehat{T} W
            \in \mbb{S}^N (\mcal{E}_A), \\
    \widehat{T} \coloneqq \mrm{blockdiag}(T, I),
        \quad
        W &\coloneqq \mrm{blockdiag}(W_1, \ldots, W_k),
    \end{align*}
    where it is assumed that each \(W_k \in \mbb{R}^{n_{k+1} \times n_k}\) is dense.
    Because \(\widehat{T}\) has more sparse entries than a \(\tau\)-banded matrix of the same size, it is therefore less dense than \(R^\top R\), where
    \begin{align*}
        R \coloneqq I + U + \cdots + U^\tau,
        \quad \text{\(U\) is the upper shift matrix}.
    \end{align*}
    Let \(V \coloneqq RW\), it then suffices to show that \(V^\top V \in \mbb{S}^N (\mcal{E}_A)\).
    Observe that \((V^\top V)_{ij} = \sum_l V_{li} V_{lj}\) is dense iff the \(i\)th and \(j\)th columns of \(V\) share a row \(\ell\) at which \(V_{li}\) and \(V_{lj}\) are both dense.
    Let \(V_k \in \mbb{R}^{(n_2 + \cdots + n_K + m) \times n_k}\) be the \(k\)th block column of \(V\), then \((V_k)_{ij}\) is dense iff
    \begin{align*}
        S(k) - n_1 - \tau + 1 \leq i \leq S(k + 1) - n_1.
    \end{align*}
    Thus \(V_k\) and \(V_{k'}\) have rows at which they are both dense iff
    \begin{align*}
        S(k) - n_1 - \tau + 1 &\leq S(k' + 1) - n_1 \\
        S(k') - n_1 - \tau + 1 &\leq S(k + 1) - n_1,
    \end{align*}
    which are equivalent to the conditions described in \(\mcal{E}_A\).
\end{proof}

\subsection{Statement and proof of Lemma \ref{lem:calA-subset}}
\begin{lemma}
\label{lem:calA-subset}
It holds that    \(\mcal{E}_A \subseteq \mcal{E}_C \cap \mcal{E}_C ^\top\).
\end{lemma}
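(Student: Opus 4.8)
The plan is to reduce the claim to a concrete inequality characterization of membership in $\mcal{E}_C \cap \mcal{E}_C^\top$, and then verify directly that the two defining inequalities of $\mcal{E}_A$ force that characterization.

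First I would unpack $\mcal{E}_C \cap \mcal{E}_C^\top$. The blocks $\{S(k-1)+1, \ldots, S(k)\}$ for $k = 1, \ldots, K$ partition $\{1, \ldots, N\}$, and by the stated properties of $k_j$ we have $S(k_j - 1) + 1 \leq j \leq S(k_j)$, so $k_j$ is the \emph{unique} index with $S(k-1)+1 \leq j \leq S(k)$. Hence for $(i,j)$ with $1 \leq i,j \leq N$, the only block $\mcal{E}_{C,k}$ that can contain $(i,j)$ is the one with $k = k_j$, which gives
\begin{align*}
    (i,j) \in \mcal{E}_C \iff i \leq S(k_j + 1) + \tau,
    \qquad
    (i,j) \in \mcal{E}_C^\top \iff j \leq S(k_i + 1) + \tau,
\end{align*}
the second being just the first applied to $(j,i)$. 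Therefore $\mcal{E}_C \cap \mcal{E}_C^\top = \{(i,j) : i \leq S(k_j+1)+\tau \text{ and } j \leq S(k_i+1)+\tau\}$, with indices understood to lie in $\{1,\ldots,N\}$.

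Next I would take an arbitrary $(i,j) \in \mcal{E}_A$, so that $S(k_j) - \tau + 1 \leq S(k_i+1)$ and $S(k_i) - \tau + 1 \leq S(k_j+1)$. Rearranging the second inequality gives $S(k_i) \leq S(k_j+1) + \tau - 1$, and combining with $i \leq S(k_i)$ yields $i \leq S(k_j+1) + \tau$. Symmetrically, the first inequality gives $S(k_j) \leq S(k_i+1) + \tau - 1$, and with $j \leq S(k_j)$ this yields $j \leq S(k_i+1) + \tau$. By the characterization above, $(i,j) \in \mcal{E}_C \cap \mcal{E}_C^\top$, which proves $\mcal{E}_A \subseteq \mcal{E}_C \cap \mcal{E}_C^\top$.

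The only real care needed — there is no genuine obstacle here — is the bookkeeping: confirming that $k_j$ is the unique block index of $j$ so the ``iff'' for $\mcal{E}_C$ is valid, and keeping the $\pm 1$ straight when passing between $S(k) - \tau + 1 \leq S(k'+1)$ and $S(k) \leq S(k'+1) + \tau$. Once the inequality description of $\mcal{E}_C \cap \mcal{E}_C^\top$ is established, the inclusion is a two-line chase using $i \leq S(k_i)$ and $j \leq S(k_j)$.
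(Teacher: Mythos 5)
Your proposal is correct and follows essentially the same route as the paper: both arguments reduce membership in $\mcal{E}_{C,k_j}$ (resp.\ $\mcal{E}_{C,k_i}^\top$) to the single inequality $i \leq S(k_j+1)+\tau$ and derive it from $i \leq S(k_i)$ together with the condition $S(k_i)-\tau+1 \leq S(k_j+1)$. The only cosmetic difference is that the paper invokes the symmetry of $\mcal{E}_A$ to check just the inclusion into $\mcal{E}_C$, whereas you verify both inclusions explicitly from the two symmetric defining inequalities.
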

\begin{proof}
    By symmetry of \(\mcal{E}_A\), it suffices to prove that \(\mcal{E}_A \subseteq \mcal{E}_C\).
    Consider \((i, j) \in \mcal{E}_A\), we claim that \((i,j) \in \mcal{E}_{C, k_j}\) --- for which a sufficient condition is
    \begin{align*}
        S(k_j - 1) + 1 \leq j \leq S(k_j),
        \quad
        1 \leq i \leq S(k_j + 1) + \tau,
    \end{align*}
    The LHS inequalities follow from the properties of $k_j$.
    For the RHS inequalities, recall that \(i \leq S(k_i)\), and rewrite the second condition of \(\mcal{E}_A\) to yield
    \begin{align*}
        1 \leq i \leq S(k_i) \leq S(k_j + 1) + \tau - 1,
    \end{align*}
    and so \((i, j) \in \mcal{E}_{C, k_j} \subseteq \mcal{E}_C\).
\end{proof}

\subsection{Statement and proof of Lemma \ref{lem:BTB-subset}}
\begin{lemma}
\label{lem:BTB-subset}
It holds that    \(B^\top T B \in \mbb{S}^N (\mcal{E}_C \cap \mcal{E}_C ^\top)\).
\end{lemma}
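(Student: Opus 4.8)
The plan is to exploit the very simple structure of \(B\). From its block definition, the first column block of \(B\) is zero (width \(n_1\)) and the remaining \(K-1\) column blocks form a block-diagonal identity, so \(B = \begin{bmatrix} 0_{N_f \times n_1} & I_{N_f} \end{bmatrix} \in \mbb{R}^{N_f \times N}\); hence \(B^\top T B = \mrm{blockdiag}(0_{n_1}, T)\), which merely embeds \(T\) into the trailing \(N_f \times N_f\) block of an \(N \times N\) matrix. Consequently the global index \(i \in \{n_1 + 1, \ldots, N\}\) corresponds to coordinate \(i - n_1\) of \(T\), and since \(T\) is \(\tau\)-banded (as already used in the proof of Lemma~\ref{lem:calB}), \((B^\top T B)_{ij}\) can be dense only when \(i, j \geq n_1 + 1\) and \(\abs{i - j} \leq \tau\).

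First I would record this observation together with the fact that \(B^\top T B\) is symmetric because \(T\) is. It then remains to check that every index pair \((i,j)\) with \(i, j \geq n_1 + 1\) and \(\abs{i - j} \leq \tau\) lies in \(\mcal{E}_C \cap \mcal{E}_C ^\top\). Unpacking \(\mcal{E}_C = \bigcup_{k=1}^{K} \mcal{E}_{C,k}\), taking \(k = k_j\) shows \((i,j) \in \mcal{E}_C\) if and only if \(i \leq S(k_j + 1) + \tau\), and symmetrically \((i,j) \in \mcal{E}_C ^\top\) if and only if \(j \leq S(k_i + 1) + \tau\).

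The verification is then short. Fix \((i,j)\) with \(i, j \geq n_1 + 1\) and \(\abs{i - j} \leq \tau\), and assume without loss of generality \(i \leq j\) (the case \(i > j\) follows from symmetry of \(B^\top T B\) and of \(\mcal{E}_C \cap \mcal{E}_C ^\top\)). Since \(i \leq j \leq S(k_j)\), we get \(i \leq S(k_j) \leq S(k_j + 1) + \tau\), so \((i,j) \in \mcal{E}_{C, k_j} \subseteq \mcal{E}_C\); since \(j \leq i + \tau \leq S(k_i) + \tau \leq S(k_i + 1) + \tau\), we get \((j,i) \in \mcal{E}_{C, k_i} \subseteq \mcal{E}_C\), i.e.\ \((i,j) \in \mcal{E}_C ^\top\). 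Hence \((i,j) \in \mcal{E}_C \cap \mcal{E}_C ^\top\), and combined with the symmetry of \(B^\top T B\) this gives \(B^\top T B \in \mbb{S}^N (\mcal{E}_C \cap \mcal{E}_C ^\top)\).

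The only real obstacle is bookkeeping: pinning down that \(B\) deletes precisely the \(x_1\)-block and acts as the identity on the remaining coordinates, and correctly translating the band condition on \(T\) (stated in \(\mbb{R}^{N_f}\) coordinates) into the \(\mbb{R}^N\) indexing that underlies \(\mcal{E}_C\). Once the uniform shift by \(n_1\) is established, the remaining inequalities use nothing beyond monotonicity of \(S\) and the defining properties of \(k_i\) and \(k_j\).
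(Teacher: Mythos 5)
Your proposal is correct and follows essentially the same route as the paper's proof: both reduce \(B^\top T B\) to a \(\tau\)-banded matrix shifted by \(n_1\) (your explicit identity \(B^\top T B = \mrm{blockdiag}(0_{n_1}, T)\) is a slightly cleaner way of saying what the paper states as ``padding zeros''), and both then verify membership in \(\mcal{E}_{C, k_j}\) and \(\mcal{E}_{C, k_i}^\top\) via \(i \leq S(k_j)\) and \(j \leq i + \tau \leq S(k_i) + \tau\) together with symmetry. No gaps.
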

\begin{proof}
    By symmetry, it suffices to show \(B^\top T B \in \mbb{M}^{N} (\mcal{E}_C)\).
    Because left (resp. right) multiplication by \(B^\top\) (resp. \(B\)) consists of padding zeros on the top (resp. left), we may treat \(B^\top T B\) as a \(\tau\)-banded matrix.    
    First suppose that \(j \leq i\), then \((B^\top T B)_{ij}\) is dense iff \(i \leq j + \tau\).
    Since \(j \leq S(k_j+1)\),
    \begin{align*}
        1 \leq i \leq j + \tau
            \leq S(k_j + 1) + \tau,
    \end{align*}
    which shows that \((i,j) \in \mcal{E}_{C, k_j} \subseteq \mcal{E}_C\).
    
    Now suppose that \(i \leq j\), then \((B^\top T B)_{ij}\) is dense iff \(j \leq i + \tau\).
    Furthermore, \(S(k_i) \leq S(k_j) \leq S(k_j+1)\), so
    \begin{align*}
        1 \leq j \leq i + \tau \leq S(k_i) + \tau \leq S(k_j + 1) + \tau
    \end{align*}
    which again shows that \((i, j) \in \mcal{E}_{C, k_j} \subseteq \mcal{E}_C\).
\end{proof}

\subsection{Statement and proof of Lemma \ref{lem:Eks}}
\begin{lemma}
\label{lem:Eks}
It holds that    \(\mcal{E}_C \cap \mcal{E}_C ^\top \subseteq \mcal{E}\).
\end{lemma}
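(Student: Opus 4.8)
The goal is to show $\mcal{E}_C \cap \mcal{E}_C^\top \subseteq \mcal{E}$, where recall $\mcal{E}_C = \bigcup_{k=1}^{K} \mcal{E}_{C,k}$ with $\mcal{E}_{C,k}$ consisting of pairs $(i,j)$ satisfying $S(k-1)+1 \leq j \leq S(k)$ and $1 \leq i \leq S(k+1)+\tau$, and $\mcal{E} = \bigcup_{k=1}^{K-1} \mcal{E}_k$ with $\mcal{E}_k$ the pairs $(i,j)$ satisfying $S(k-1)+1 \leq i,j \leq S(k+1)+\tau$. The natural approach is to take an arbitrary $(i,j) \in \mcal{E}_C \cap \mcal{E}_C^\top$ and exhibit an index $k$ with $1 \leq k \leq K-1$ such that $(i,j) \in \mcal{E}_k$. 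Since $(i,j) \in \mcal{E}_C$ there is some $k'$ with $S(k'-1)+1 \leq j \leq S(k')$ and $i \leq S(k'+1)+\tau$; in the language of the $k_i$ notation this forces $k' = k_j$, so $(i,j) \in \mcal{E}_C$ is equivalent to $i \leq S(k_j+1)+\tau$. Symmetrically, $(i,j) \in \mcal{E}_C^\top$ is equivalent to $j \leq S(k_i+1)+\tau$.

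\textbf{Key steps.}
First I would rewrite membership in $\mcal{E}_C \cap \mcal{E}_C^\top$ in the clean symmetric form: $i \leq S(k_j+1)+\tau$ and $j \leq S(k_i+1)+\tau$, using the third remarked property $S(k_i-1)\leq i \leq S(k_i)$ to pin down that the only block index that can work in $\mcal{E}_{C,k'}$ for the $j$-coordinate is $k' = k_j$ (and likewise $k_i$ for the transpose). Next, set $k := \min(k_i, k_j)$; by the monotonicity property ($i \leq j \Rightarrow k_i \leq k_j$) this is well-defined, and I must check $1 \leq k \leq K-1$ — the lower bound is immediate, and the upper bound needs a short argument that neither $i$ nor $j$ can lie strictly in the final block $n_{K}$-region in a way that makes $k = K$; this should follow because if $k_j = K$ then $S(k_j+1)+\tau = S(K+1)+\tau \geq N$, and one checks the block structure still assigns the pair to $\mcal{E}_{K-1}$ (using $n_{K+1} = m$ and $S(K)=N$), so WLOG $k \leq K-1$. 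Then I verify $(i,j) \in \mcal{E}_k$: the lower bounds $S(k-1)+1 \leq i$ and $S(k-1)+1 \leq j$ hold because $k = \min(k_i,k_j) \leq k_i$ gives $S(k-1) \leq S(k_i-1) \leq i$ (shift by one handled by the $+1$), similarly for $j$; and the upper bounds $i,j \leq S(k+1)+\tau$ follow because WLOG $k = k_i \leq k_j$, so $i \leq S(k_i) \leq S(k_i+1)+\tau = S(k+1)+\tau$ directly, while $j \leq S(k_i+1)+\tau = S(k+1)+\tau$ is exactly the $\mcal{E}_C^\top$ condition rewritten with $k = k_i$.

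\textbf{Main obstacle.}
I expect the only real subtlety to be the boundary bookkeeping at the last block: making sure the index $k = \min(k_i,k_j)$ can always be taken in the range $1 \leq k \leq K-1$ rather than hitting $K$, and correctly handling the convention $n_{K+1} = m$, $S(K)=N$ in the definitions of $\mcal{E}_{C,K}$ and $\mcal{E}_{K-1}$. Everything else is a mechanical chase through the inequalities $S(k_i-1) \leq i \leq S(k_i)$ and the monotonicity of $S$ and $k_{(\cdot)}$. I would handle the boundary case by observing that $\mcal{E}_{C,K}$ contributes pairs with $S(K-1)+1 \leq j \leq S(K) = N$ and $i \leq S(K+1)+\tau$, and when combined with $\mcal{E}_C^\top$ these land in $\mcal{E}_{K-1}$ since $S((K-1)+1)+\tau = S(K)+\tau = N+\tau \geq N \geq i,j$ and $S(K-2)+1 \leq i,j$ — reducing it to the generic argument with $k = K-1$.
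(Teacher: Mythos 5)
Your proposal is correct and follows essentially the same route as the paper: take $(i,j)\in\mcal{E}_C\cap\mcal{E}_C^\top$, pin down the unique block indices $k_i,k_j$, set $k=\min(k_i,k_j)$ (the paper's $k_i$ after assuming $i\le j$), and chase the inequalities $S(k_i-1)+1\le i\le S(k_i)$ together with $j\le S(k_i+1)+\tau$ to land in $\mcal{E}_k$. Your explicit handling of the boundary case $k_i=K$ (folding it into $\mcal{E}_{K-1}$) is a point the paper's proof glosses over, so keeping that remark is a small improvement rather than a deviation.
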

\begin{proof}
    Consider \((i,j) \in \mcal{E}_C \cap \mcal{E}_C ^\top\) and suppose without loss of generality that \(i \leq j\).
    Then \((i,j) \in \mcal{E}_{C, k_j}\) and \((i,j) \in \mcal{E}_{C, k_i}^\top\), meaning that the following conditions hold:
    \begin{align*}
        S(k_j - 1) + 1 \leq j \leq S(k_j), &\quad 1 \leq i \leq S(k_j +1) + \tau, \\
        S(k_i - 1) + 1 \leq i \leq S(k_i), &\quad 1 \leq j \leq S(k_i + 1) + \tau.
    \end{align*}
    Tightening the bounds on \(i\) and by monotonicity of \(S\),
    \begin{align*}
        S(k_i - 1) + 1 \leq i \leq S(k_i) \leq S(k_i + 1) + \tau,
    \end{align*}
    and also for \(j\) we have
    \begin{align*}
        S(k_i -1) + 1 \leq S(k_j - 1) + 1 \leq j \leq S(k_i + 1) + \tau,
    \end{align*}
    which together imply that \((i,j) \in \mcal{E}_{k_i} \subseteq \mcal{E}\).
\end{proof}

\subsection{Proof of Theorem \ref{thm:max-cliques-Z}}
The structure of \(\mcal{E}\) results in a lengthy proof of Theorem~\ref{thm:Z-sparsity}.
However, it is easy to guess each \(\mcal{E}_k\) by simple experiments, and leads to a straightforward proof of Theorem~\ref{thm:max-cliques-Z}.

    Observe that \(\mbb{S}^{N} (\mcal{E}_k)\) is the set of block diagonal matrices whose \((i,j)\) entry is dense iff
    \begin{align*}
        S(k-1) + 1 \leq i, j \leq S(k+1) + \tau.
    \end{align*}
    Because \(\mcal{E}\) is a union of the \(\mcal{E}_k\) sparsities, \(\mbb{S}^N (\mcal{E})\) is therefore the set of matrices with overlapping block diagonals, which are known to be chordal~\cite[Section 8.2]{vandenberghe2015chordal}.

    It remains to identify the maximal cliques of \(\mcal{G}(\mcal{V}, \mcal{E})\).
    First consider \(k < p\) with \(k \geq 1\), and observe that \((N, N) \not\in \mcal{E}_k\).
    By construction each \(\mcal{E}_k\) is the edges of a clique, and when \(k < p\)  such \(\mcal{E}_k\) is also not contained by any other \(\mcal{E}_{k'}\) because
    \begin{align*}
        \begin{dcases}
        S(k - 1) + 1 < S(k' - 1) + 1, &\enskip \text{if \(k < k'\)} \\
        S(k' + 1) + \tau < S(k + 1) + \tau, &\enskip \text{if \(k > k'\)},
        \end{dcases}
    \end{align*}
    meaning that there exists \((i,j) \in \mcal{E}_k \setminus \mcal{E}_{k'}\) with
    \begin{align*}
        \begin{dcases}
        i = j = S(k-1) + 1 &\enskip \text{if \(k < k'\)} \\
        i = j = S(k+1) + \tau &\enskip \text{if \(k > k'\)}.
        \end{dcases}
    \end{align*}
    Consequently, \(\mcal{E}_k\) is in fact the edges of a maximal clique containing indices \(i\) that satisfy
    \begin{align*}
        S(k-1) + 1 \leq i \leq S(k+1) + \tau,
    \end{align*}
    which are exactly the conditions of \(\mcal{C}_k\) for \(k < p\).
    
    Now consider \(k > p\) with \(k \leq K - 1\), and observe that \(\mcal{E}_k \subseteq \mcal{E}_p\) because any \((i,j) \in \mcal{E}_k\) will satisfy
    \begin{align*}
        S(p-1) + 1 < S(k -1) + 1 \leq i, j \leq N \leq S(p+1) + \tau,
    \end{align*}
    and so \((i,j) \in \mcal{E}_p\) as well.
    \(\mcal{E}_p\) is therefore the edges of a clique that contains all other cliques that contain \(N\), and is thus maximal --- corresponding to the description of \(\mcal{C}_p\). \hfill$\square$

\subsection{Proof of Theorem \ref{thm:equiv-probs}}
    Because \(Z(\gamma) \in \mbb{S}^{N} (\mcal{E})\) and \(\mcal{G}(\mcal{V}, \mcal{E})\) is chordal with maximal cliques \(\{\mcal{C}_1, \ldots, \mcal{C}_p\}\), conclude from Lemma~\ref{lem:chordal-psd} that \(Z(\gamma) \preceq 0\) iff each \(Z_k \preceq 0\). \hfill$\square$

\end{document}